\date{}
\newtheorem{theorem}{Theorem}
\newtheorem{lemma}{Lemma}
\theoremstyle{remark}
\newtheorem{remark}{Remark}
\theoremstyle{definition}
\newtheorem{assumption}{Assumption}
\begin{document}

\title{Convergent Reinforcement Learning Algorithms for Stochastic Shortest Path Problem}

\author[1]{Soumyajit Guin}
\author[1]{Shalabh Bhatnagar}
\affil[1]{Department of Computer Science and Automation, Indian Institute of Science, Bengaluru 560012, India\footnote{Email: gsoumyajit@iisc.ac.in, shalabh@iisc.ac.in. This work was supported in part by 
Project No.~DFTM/02/3125/M/04/AIR-04 from DRDO under DIA-RCOE and by the 
Walmart Centre for Tech Excellence, IISc.}}
\maketitle

\begin{abstract}
In this paper we propose two algorithms in the tabular setting and an algorithm for the function approximation setting for the Stochastic Shortest Path (SSP) problem. SSP problems form an important class of problems in Reinforcement Learning (RL), as other types of cost-criteria in RL can be formulated in the setting of SSP. We show asymptotic almost-sure convergence for all our algorithms. We observe superior performance of our tabular algorithms compared to other well-known convergent RL algorithms. We further observe reliable performance of our function approximation algorithm compared to other algorithms in the function approximation setting.
\end{abstract}

\section{Introduction}\label{intro}
\thispagestyle{firstpage}
\fancyhead[R]{This}
Reinforcement Learning (RL) is a machine learning paradigm where an agent takes actions in an uncertain environment. The goal of the agent is to maximize/minimize the expected rewards/costs it gets by taking those actions. See \cite{SuttonBarto2018}, \cite{NDPbook} for text book treatment of RL. In this work, we consider the Stochastic Shortest Path (SSP) problem (see \cite{NDPbook}, \cite{bertsekas2012dynamic}), that is typically characterized by the presence of a Goal or Terminal state. We propose three algorithms for the SSP problem. Two of these algorithms are proposed for the tabular setting. Here we maintain a table for all the states and a table for all the state-action pairs. Further, we propose an  algorithm for the function approximation setting. Here, instead of maintaining tables, we approximate the value and policy functions. The SSP is a powerful setting to solve because other types of cost criteria namely Finite Horizon, Infinite Horizon Discounted Cost and Long-run Average Cost, respectively, can all be equivalently formulated as suitable SSP problems  \cite{NDPbook,bertsekas1998}. It is also a useful problem because most practitioners use this as the objective, where a trajectory ends when some goal or terminal state is reached, instead of a single long trajectory that infinite horizon problems would typically result in \cite{li2023deep}.  

\subsection{Related Work}
To begin with, we briefly describe two assumptions commonly used for SSP problems.  If under a stationary policy the termination state is reached with probability $1$, it is called a proper policy. For a precise definition, see Section~\ref{prelim}. The assumptions are as follows:
\begin{itemize}
    \item Basic Assumption: Every policy is a proper policy.
    \item Standard Assumption: There exists at least one proper policy. Further, for policies that are not proper, every cycle must have a positive cost.
\end{itemize}

In \cite{tsitsiklis1994}, it is shown that Q-Learning converges under the Basic assumption, and also under the Standard assumption, provided the stability (boundedness) of the iterates is guaranteed. In \cite{yu2013boundedness}, it is shown that the iterates of Q-Learning do remain stable under the Standard assumption. In \cite{yu2013q}, the authors propose an  improved Q-learning algorithm for the SSP Problem under the Standard assumption, in which minimization at every step over all controls is not required. In \cite{teichteil2012}, a modified assumption to the SSP problem is proposed, where the probability to reach the goal state under every policy is less than $1$. But the proposed algorithm is model-based (where the probability and cost function are assumed known). This is unlike our algorithms that are data-driven and model-free as we do not assume knowledge of the system model. In \cite{patek1999}, numerical procedures such as Value Iteration (VI) and Policy Iteration (PI) are investigated under the Standard assumption for zero-sum games, where each player has the SSP problem as its objective. In \cite{guillot2020}, it is shown that model based algorithms like VI and PI also converge under the more general assumption, namely that improper policies can have cycles with nonnegative cost. In \cite{neu2012}, certain regret bounds for SSPs are obtained but under the restrictive assumption where the MDP does not have cycles. There is a whole line of work on model-based RL for the SSP problem, which we will not dive into, see \cite{chen2022}, \cite{min2022}, \cite{yin2022offline}. These methods are computationally heavy \cite{plaat2023high}. In \cite{randour2015variations}, some variations to the standard SSP problem are provided under the Basic assumption, but in our paper we solve the standard SSP problem, which is just to minimize the expected value. In \cite{kwon2011}, an enhanced Q-Learning algorithm is proposed in the tabular setting for the SSP problem under the Basic assumption.

\begin{table*}
\centering
\begin{tabular}{|p{1.6cm}|p{2cm}|p{2cm}|p{2cm}|p{2cm}|p{2cm}|}
\hline
Assump-tion & Dynamic Programming (VI, PI, etc.)& Tabular Q-Learning & Tabular Actor-critic and Critic-Actor & SARSA with LFA & Function Approximtion Actor-critic\\
\hline
Basic & \cite{NDPbook} & \cite{tsitsiklis1994} (Stability and Convergence) &Our Paper (Stability and  Convergence)&\cite{gordon2000} (not a rigorous proof)& Our Paper (Stability and Convergence)\\
\hline
Standard & \cite{bertsekas2012dynamic} & \cite{tsitsiklis1994} (Convergence assuming Stability), \cite{yu2013boundedness} (Stability) &Can be extended from our paper & NA& Can be extended from our paper\\
\hline
\end{tabular}
\caption{Comparison with Relevant Literature}
\label{table1}
\end{table*}

 The Q-learning and SARSA algorithms have nice convergence guarantees in the tabular setting \cite{tsitsiklis1994}, \cite{singh2000convergence}. In this paper we propose two convergent algorithms for the SSP problem in the tabular setting. The first of these is the actor-critic (AC) algorithm, where the value function parameter corresponding to the critic is updated along the faster timescale while the policy parameter or the actor is updated along the slower timescale. The second tabular algorithm that we present is the critic-actor (CA) algorithm, wherein the timescales are reversed. Whereas the AC algorithm mimics policy iteration, CA is seen to mimic value iteration, so the underlying philosophy in these algorithms is different. Nonetheless, we observe that  there is not much difference between the convergence guarantees of these two algorithms. Both algorithms exhibit stable learning due to their on-policy nature. So its a personal preference for the user what algorithm to use in the tabular setting. In Section~\ref{numerical}, we observe that our tabular algorithms AC and CA can perform better than other tabular algorithms such as  Q-Learning and SARSA when theoretically backed exploration and exploitation is used.

When it comes to the function approximation setting, Q-Learning is not function approximation friendly \cite{baird1995} due to its off-policy nature \cite{tsitsiklis1997}. Asymptotic almost-sure convergence of Q-Learning with linear-function approximation (LFA) has been analysed in \cite{melo2008}, but the conditions there are hard to satisfy without an on-policy algorithm (see Theorem 1 of \cite{melo2008}). This brings us to the asymptotic almost-sure convergence analysis of SARSA with LFA done in \cite{melo2008}. However, in their analysis of SARSA with LFA, the authors need the behavioral policy to be Lipschitz continuous, with a small Lipschitz constant (see Theorem 2 of \cite{melo2008}). This is not possible without a parameterized policy like softmax. While their work is based on the Discounted Cost problem, similar conclusions can be drawn for the SSP problem as well. Convergence of SARSA with LFA in the expected sense has recently been analyed in \cite{zhang2023sarsa} for the Discounted Cost problem, which addresses the issue of \cite{melo2008}. \cite{zhang2023sarsa} shows that the softmax policy with high temperature can act as a policy with small Lipschitz constant. In the experiments,  \cite{zhang2023sarsa} provides an SSP example that we will use in Section~\ref{numerical}. In \cite{gordon2000}, almost sure convergence of SARSA with LFA under the $\epsilon$-greedy policy for the SSP problem is shown, but they do not discuss about the optimality of the final Q-values obtained. A concise literature review for works on the SSP prroblem is given in Table~\ref{table1}.

It was shown in \cite{CriticActor2023} that if we reverse the timescales of traditional actor-critic algorithms for the Discounted cost problem in the tabular setting, we still get a valid algorithm. This algorithm has been referred to as the Critic-Actor algorithm and it has been shown to converge to the same optimal point as Actor-Critic. In this paper, we propose both Actor-Critic and Critic-Actor algorithms for the SSP problem in the tabular setting. We further propose an Actor-Critic algorithm in the function approximation setting for the SSP problem.

\subsection{Contributions}
\begin{enumerate}
\item In this work, we propose two separate two-timescale algorithms in the tabular setting for the SSP problem.

\item We further propose a two-timescale algorithm for the SSP problem in the function approximation setting.

\item We show the asymptotic almost-sure convergence of all our algorithms.

\item We show the results of experiments that demonstrate that our tabular algorithms perform better than other popular algorithms.

\item In our experiments, we further observe that our function approximation algorithm converges and performs reliably in contrast to other algorithms in the function approximation setting.
\end{enumerate}

\section{Preliminaries}\label{prelim}
We consider a Markov Decision Process(MDP) that has a finite state space $S$, finite action space $A$. For simplicity, we assume every action is feasible in every state. Let $\{X_n\}$ be the underlying Markov Chain which takes values in the state space S. The MDP satisfies the property:
\[
\forall i \in S,P(X_{n+1}=i|X_m,U_m,m\leq n)=p(X_n,U_n,i)\text{ a.s.}
\]
Here $U_n$ is the action chosen at time instant $n$ when current state is $X_n$.The transition function $p:S\times A \times S\rightarrow [0,1]$ is such that $p(i,u,j)$ is the probability of transitioning to state $j$ from state $i$ when action chosen is $u$. Let $g:S\times A \times S \rightarrow \mathbb{R}$ be the single stage cost. Let $i_0 \in S$ be a zero cost absorbing state or the terminating state. Once the system enters the state, it remains in the state with cost $0$, which means,
\[
p(i_0,u,i_0)=1, g(i_0,u,i_0)=0, \forall u \in A.
\]
Let us define the state space excluding the terminal state as $S^-:=S\backslash \{i_0\}$. In this paper we consider Stationary Randomized Policies (SRP) which means a policy $\pi:S^-\times A \rightarrow [0,1]$ is such that $\pi(i,u)$ is the probability of taking action $u$ from state $i$. Thus, for any $i\in S^{-}$, $\sum_{u\in A} \pi(i,u)=1$. 
Note that no action needs to be taken once the system reaches state $i_0$. Let action $U_n$ be sampled from $\pi(X_n,.)$.

Our goal is to find a SRP $\pi$ which minimizes the expected total cost:
\begin{equation}\label{avg}
J(\pi)=\mathbb{E}\Big[\sum_{t=0}^{\tau(\pi)-1}g(X_t,U_t,X_{t+1})\Big],
\end{equation}
where $\tau(\pi)$ is the first reaching time to state $i_0$ under policy $\pi$. Let $|S|$ and $|S^-|$ respectively denote the cardinalities of the sets $S$ and $S^-$. 
\begin{assumption}\label{a1}
Every SRP is proper. In other words, there is a positive probability that $i_0$ will be reached  in at most $|S^-|$ time steps under every SRP $\pi$ regardless of the initial state. Formally stated, under every SRP $\pi$,
\begin{equation}\label{reach}
\max_{i \in S^-}P(X_{|S^-|}\neq i_0|X_0=i,\pi)<1.
\end{equation}
\end{assumption}
The value function $V^\pi(i),i \in S^-$ under policy $\pi$ is defined as:
\[
V^\pi(i) := E\left[ \sum_{t=0}^{\tau(\pi)-1} g(X_t, U_t, X_{t+1})\mid X_0=i\right].
\]
Then $V^\pi(i),i \in S^-$ is the solution to the Bellman equation (for the given policy $\pi$):
\begin{equation}\label{ssp-bepi}
\begin{split}
&V^\pi(i)=\sum_{u \in A}\pi(i,u)\Big(\sum_{j \in S}p(i,u,j)g(i,u,j)\\
&+\sum_{j\in S^-} p(i,u,j)V^\pi(j)\Big).
\end{split}
\end{equation}
The Q-value function $Q^\pi(i,u),i\in S^-,u \in A$ under policy $\pi$ is defined as:
\begin{equation}\label{Q-def}
Q^\pi(i,u)=\sum_{j \in S}p(i,u,j)g(i,u,j)+\sum_{j\in S^-} p(i,u,j)V^\pi(j).
\end{equation}
Now let 
\[
V^*(i)=\min_\pi V^\pi(i),i \in S^-.
\]
Then $V^*(i),i\in S^-$ is the solution to the Bellman equation:
\begin{equation}
\begin{split}
&V^*(i)=\min_{u \in A}\Big(\sum_{j \in S}p(i,u,j)g(i,u,j)\\
&+\sum_{j\in S^-} p(i,u,j)V^*(j)\Big).
\end{split}
\end{equation}

\section{Actor-Critic and Critic-Actor Algorithms in the Tabular Setting}
\begin{algorithm}[tb]\label{algorithm1}
\caption{Actor-Critic and Critic-Actor Tabular}
\textbf{Input}: Step-sizes $a(n), b(n)$, and  $Y_n,Z_n$ satisfying Assumption~\ref{a3}.
\begin{algorithmic}[1]
\STATE Initialize $V_0(i),i\in S^-$; $\theta_0(i,u),i\in S^-,u \in A$.
\FOR{each timestep \(n \geq 0 \)}
\STATE Play action $\chi_n(i)$ from state $i$ and collect next state $\eta^1_n(i,\chi_n(i))$ and cost $g(i,\chi_n(i),\eta^1_n(i,\chi_n(i)))$, where $i=Y_n$.
\STATE Perform critic update \eqref{critic2}.
\STATE Take action $u$ in state $i$ and collect next state $\eta^2_n(i,u)$ and cost $g(i,\chi_n(i),\eta^1_n(i,u))$, where $(i,u)=Z_n$.
\STATE Perform actor update \eqref{actor2}.
\ENDFOR
\end{algorithmic}
\end{algorithm}

We make the following assumption on the policy structure.
\begin{assumption}\label{a2}
    We assume that $\pi$ is the soft-max policy parameterized by $\theta$:
    \[
    \pi_\theta(i,u)=\frac{\exp(\theta(i,u))}{\displaystyle\sum_{u' \in A} \exp(\theta(i,u'))},i \in S^-, u \in A.
    \]
\end{assumption}
We will use $\pi$ and $\theta$ interchangeably henceforth and 
present a two timescale off-line algorithm. Off-line here means the states and state-action pairs to be updated are sampled from some distribution. This is a more general case of the more widely studied on-line algorithms \cite{Konda2003,bhatnagar2009}, where the distribution is the estimated policy itself. We lay out the conditions required on the step-sizes of our algorithm. These conditions are generalized from \cite{borkar1998} to the multi-timescale case (see \cite{konda1999},\cite{CriticActor2023}).

Let $\{Y_n\},\{Z_n\}$ be $S^-$ valued and $S^-\times A$ valued processes respectively. Then $Y_n=i$ means that the $i$-th component will be updated and $Z_n=(i,u)$ means that the $(i,u)$-th component will be updated at time instant $n$. Define $\nu_1(i,n),\nu_2(i,u,n)$ as follows: For $n\geq 1$, 
\[
\begin{split}
&\nu_1(i,n)=\sum_{m=0}^{n-1}I\{Y_m=i\} \text{ with }\nu_1(i,0)=0\\
&\nu_2(i,u,n)=\sum_{m=0}^{n-1}I\{Z_m=(i,u)\} \text{ with }\nu_2(i,u,0)=0.
\end{split}
\]
Here $I\{.\}$ is the indicator function, $\nu_1(i,n)$ is the number of times the $i$-th component is updated and $\nu_2(i,u,n)$ is the number of times the $(i,u)$-th component is updated until time instant $n$.

We make the following assumptions on the step-sizes:
\begin{assumption}[Step-Sizes]
\label{a3}
$a(n),b(n),n\geq 0$ are positive sequences satisfying the following conditions. Let $d(n),n\geq 0$ represent $a(n),b(n),n\geq 0$. Then
\begin{itemize}
\item[(i)] ${\displaystyle \sum_n d(n)=\infty}$, ${\displaystyle \sum_n d(n)^2<\infty}$.
\end{itemize}
This is the standard Robbins-Monro condition on step-sizes for Stochastic Approximation. We additionally require the following:
\begin{itemize}
\item[(ii)] For any $x\in (0,1)$, ${\displaystyle \sup_n \frac{d([xn])}{d(n)}<\infty},$ where $[xn]$ denotes the integer part of $xn$.
\item[(iii)] For any $x\in (0,1)$, ${\displaystyle D(n) := \sum_{i=0}^{n} d(i)}$, $n\geq 0$, we have
	   ${\displaystyle \frac{D([yn])}{D(n)} \rightarrow 1}$, as $n\rightarrow\infty$, 
	   uniformly over $y\in [x,1]$.
\end{itemize}
Conditions (ii) and (iii) are technical assumptions which are required for asynchronous type updates, see (2.2) and (2.3) of \cite{borkar1998}. For $x>0$, define 
\begin{eqnarray*}
N_1(n,x) &=& \min\{ m>n \mid \sum_{k=n+1}^{m} \bar{a}(k) \geq x\},\\
N_2(n,x) &=& \min\{ m>n \mid \sum_{k=n+1}^{m} \bar{b}(k) \geq x\},
\end{eqnarray*}
where $\bar{a}(n) \equiv a(\nu_1(Y_n,n))$ and $\bar{b}(n) \equiv b(\nu_2(Z_n,n))$, respectively.
\begin{itemize}
\item[(iv)] For $x>0$, the limits
\[\lim_{n\rightarrow\infty} \frac{\sum_{j=\nu_1(i,n)}^{\nu_1(i,N_1(n,x))} a(j)}{\sum_{j=\nu_1(k,n)}^{\nu_1(k,N_1(n,x))} a(j)}, \mbox{   }
\lim_{n\rightarrow\infty} \frac{\sum_{j=\nu_2(i,u,n)}^{\nu_2(i,u,N_2(n,x))} b(j)}{\sum_{j=\nu_2(k,v,n)}^{\nu_2(k,v,N_2(n,x))} b(j)},
\]
exist almost surely for states $i,k\in S^-$ and state-action tuples $(i,u),(k,v)\in S^-\times A$ in the two limits respectively.	
\end{itemize}
Intuitively this means the updating of different components is evenly spread.
\begin{itemize}
\item[(v)] For Actor-Critic: $\frac{b(n)}{a(n)}\rightarrow 0$. For Critic-Actor: $\frac{a(n)}{b(n)}\rightarrow 0$.
\end{itemize}
This is the two timescale stochastic approximation condition, see chapter 8 of \cite{borkar2022}.
\begin{itemize}
\item[(vi)] There exists a constant $\kappa>0$ such that the following holds almost surely $\forall i \in S^-, u\in A$:
\[
\liminf_{n \to \infty}\frac{\nu_1(i,n)}{n}\geq \kappa,\liminf_{n \to \infty}\frac{\nu_2(i,u,n)}{n}\geq \kappa .
\]
\end{itemize}
It means that the updates for all states and state-action pairs are performed infinitely often.
\end{assumption}
    
\begin{remark}\label{rem1}
 Example of step sizes that satisfy these conditions for Actor-Critic: $a(n)=\log(n+2)/(n+2),b(n)=1/(n+1)$; for Critic-Actor: $a(n)=1/((n+2)\log(n+2)), b(n)=\log(n+2)/(n+2)$. To increase or decrease the step sizes we can multiply any positive constant to them which will also satisfy these conditions.
\end{remark}

Let $V_n(i),i \in S^-$, denote our critic updates with fixed $V_n(i_0)=0, \forall n$. Let $\theta_n(i,u),i \in S^-, u \in A$, denote our actor updates. Let $\{\eta^1_n(i,u)\},\{\eta^2_n(i,u)\},i \in S^-, u \in A$ denote families of $S$-valued independent and identically distributed (i.i.d) random variables with the law $p(i,u,\cdot)$. Let $\chi_n(i),i \in S^-$ be $A$-valued random variables with the law $\pi_{\theta_n}(i,\cdot)$. Let $\Gamma_{\theta^0}(\cdot),\theta^0>0$ be a projection operator to the set $[-\theta^0,\theta^0]$. The updates of our algorithm are given by: $\forall i\in S^-,u \in A$,
\begin{equation}\label{critic2}
\begin{split}
&V_{n+1}(i)=V_n(i)+a(\nu_1(i,n))\big[g(i,\chi_n(i),\\
&\eta^1_n(i,\chi_n(i)))+V_n(\eta^1_n(i,\chi_n(i)))-V_n(i)\big]I\{Y_n=i\},
\end{split} 
\end{equation}
\begin{equation}\label{actor2}
\begin{split}
&\theta_{n+1}(i,u)=\Gamma_{\theta^0}\big(\theta_n(i,u)-b(\nu_2(i,u,n))
\big[g(i,u,\\
&\eta^2_n(i,u))+V_n(\eta^2_n(i,u))-V_n(i)\big]I\{Z_n=(i,u)\}\big).\
\end{split}
\end{equation}
\begin{remark}
The recursions of both Actor-Critic and Critic-Actor are the same, only the difference is in the step size condition~\ref{a3}(v). Further, for a maximization (instead of a minimization) problem, one needs to change the $-$ sign to $+$ sign after $\theta_n(i,u)$ in recursion \eqref{actor2} so that the $\theta$-recursion then performs an `ascent' instead of a `descent'.
\end{remark}
The algorithm is formally given in Algorithm 1. Our Actor-Critic and Critic-Actor algorithms are essentially the same as the Actor-Critic and Critic-Actor algorithms for the discounted cost problem given in \cite{konda1999} and \cite{CriticActor2023} respectively, barring that we take discount factor as $1$. The novelty lies in the fact that our algorithms are for the SSP setting and not discounted cost. Further, our Convergence Analysis given in Section~\ref{convergence-tabular}, is quite different from the Convergence Analysis done in \cite{konda1999} and \cite{CriticActor2023}.

\subsection*{Online Tabular Algorithms}
Our off-line algorithms can easily be converted to online algorithms in the following manner. Let $\{\eta^1_n(i,u)\}=\{\eta^2_n(i,u)\}, \forall i \in S^-, u \in A$. An episode comprises of a sequence of the four tuples of state, action, reward and next state, starting from the start state and ending in the terminal state $i_0$. For the $m$th episode, with $m\geq 0$, let the $k$th state be denoted $i_k(m)$, the $k$th action taken be denoted $u_k(m)$ and the length of the episode be denoted $\tau'(m)$. Let $n(m,k)=\sum_{t=0}^{m-1}\tau'(t)+k$ with $n(0,k)=k$. The action $u_k(m)=\chi_{n(m,k)}(i_k(m))$. Now let $Y_{n(m,k)}=i_k(m)$ and $Z_{n(m,k)}=(i_k(m),u_k(m))$. This will ensure that only those components of the states and state-action tuples that are visited get updated at each episode. The updates for online tabular algorithms are given below: $\forall i\in S^-,u \in A$,

\begin{equation}\label{critic3}
\begin{split}
&V_{n(m,k)+1}(i)=V_{n(m,k)}(i)+a(\nu_1(i,n))\big[g(i,u_k(m),\\
&i_{k+1}(m))+V_{n(m,k)}(i_{k+1}(m))-V_{n(m,k)}(i)\big]\\
&I\{i_k(m)=i\},
\end{split} 
\end{equation}
\begin{equation}\label{actor3}
\begin{split}
&\theta_{n(m,k)+1}(i,u)=\Gamma_{\theta^0}\big(\theta_{n(m,k)}(i,u)-b(\nu_2(i,u,n))\\
&\big[g(i,u,i_{k+1}(m))+V_{n(m,k)}(i_{k+1}(m))-V_{n(m,k)}(i)\big]\\
&I\{(i_k(m),u_k(m))=(i,u)\}\big).\
\end{split}
\end{equation}

\section{Convergence Analysis of Algorithms in Tabular Setting}\label{convergence-tabular}
We proceed to prove the convergence of both Actor-Critic (AC) and Critic-Actor (CA) algorithm in the tabular setting. Section 4 of \cite{konda1999} gives the convergence of Asynchronous Stochastic Approximation with two timescales. We will use those results to prove the convergence of AC and some extra results from \cite{benaim2005} to prove the convergence of CA. (A1) and (A6) of \cite{konda1999} are built in Assumption~\ref{a3} except that $a(n)=o(b(n))$ for CA. (A5) of \cite{konda1999} is irrelevant for us, as our updates do not involve delays. Let 
\[
\begin{split}
&\mathcal{F}_n \stackrel{\triangle}{=} \sigma(\chi_m(i),\eta^1_m(i,u),\eta^2_m(i,u),m<n;V_m(i),\theta_m(i,u),\\
&Y_m,Z_m,m\leq n,i \in S^-, u \in A), n \geq 0,
\end{split}
\] 
denote an increasing family of associated sigma fields. This keeps a record of the history of the algorithm till time instant $n$. Let us consider an auxiliary MDP with probability function $p$ and reward function $g'(i,u,j)=1, \forall i \in S^-,u \in A,j \in S$. Let $\xi(i),i \in S^-$ be the solution to the Bellman equation for the auxiliary MDP:
\begin{equation}
\begin{split}
\xi(i)=\max_{u \in A}\Big[1+\sum_{j\in S^-}p(i,u,j)\xi(j)\Big],
\end{split}
\end{equation}
with $\xi(i_0)=0$. This will be required in Lemmas~\ref{stability-lemma} and \ref{lem-main}. We define the following vector variables:
\begin{equation}\label{V-def}
V:=(V(i),i\in S^-)^T,
\end{equation}
\begin{equation}\label{theta-def}
\theta=(\theta(i,u),i\in S^-,u \in A)^T.
\end{equation}
This will be used throughout our analysis. The Bellman operator $T_\pi$ under policy $\pi$, and the Bellman operator for optimality $T$ are defined as:
\begin{equation}
\begin{split}
&(T_\pi V)(i)=\sum_{u \in A}\pi(i,u)\Big(\sum_{j \in S}p(i,a,j)g(i,a,j)\\
&+\sum_{j\in S^-} p(i,u,j)V(j)\Big),i \in S,
\end{split}
\end{equation}
\begin{equation}
\begin{split}
&(T V)(i)=\min_{u \in A}\Big(\sum_{j \in S}p(i,a,j)g(i,a,j)\\
&+\sum_{j\in S^-} p(i,u,j)V(j)\Big),i \in S.
\end{split}
\end{equation}
We have the following stability lemma.
\begin{lemma}\label{stability-lemma}
$\sup_n ||V_n||<\infty$ a.s
\end{lemma}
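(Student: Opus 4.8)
The plan is to treat the critic recursion \eqref{critic2} as an asynchronous stochastic approximation and to apply a Borkar--Meyn style stability argument, the essential ingredient being a uniform weighted-sup-norm contraction supplied by the auxiliary vector $\xi$. First I would put the increment into mean-field form. Conditioned on $\mathcal{F}_n$ and on $\{Y_n=i\}$, since $\chi_n(i)\sim\pi_{\theta_n}(i,\cdot)$, $\eta^1_n(i,u)\sim p(i,u,\cdot)$ and $V_n(i_0)=0$, the bracketed term of \eqref{critic2} has conditional mean $(T_{\pi_{\theta_n}}V_n)(i)-V_n(i)$. Writing $T_\pi V=g_\pi+P_\pi V$, where $P_\pi=\big(\sum_{u}\pi(i,u)p(i,u,j)\big)_{i,j\in S^-}$ is the substochastic matrix of transitions within $S^-$ and $g_\pi$ collects the expected one-step costs, the recursion becomes $V_{n+1}(i)=V_n(i)+a(\nu_1(i,n))\,I\{Y_n=i\}\big(h_{\theta_n}(V_n)(i)+M_{n+1}(i)\big)$, with mean field $h_\theta(V):=T_\pi V-V$ affine and globally Lipschitz uniformly in $\theta$, and $M_{n+1}$ a martingale difference whose conditional second moment is $O(1+\|V_n\|^2)$.

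The crucial structural input is a contraction that is uniform over all policies. From the auxiliary Bellman equation for $\xi$, for every SRP $\pi$ one gets $(P_\pi\xi)(i)=\sum_u\pi(i,u)\sum_{j\in S^-}p(i,u,j)\xi(j)\le\xi(i)-1$ for all $i\in S^-$. Since Assumption~\ref{a1} makes $\xi$ finite componentwise and $\xi(i)\ge1$, setting $\beta:=1-1/\max_{i}\xi(i)<1$ yields $P_\pi\xi\le\beta\xi$, whence $\|P_\pi V\|_\xi\le\beta\|V\|_\xi$ for all $V$ and all $\pi$, where $\|V\|_\xi:=\max_{i\in S^-}|V(i)|/\xi(i)$. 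Because $S^-$ is finite and $1\le\xi(i)\le\max_i\xi(i)<\infty$, the norm $\|\cdot\|_\xi$ is equivalent to the Euclidean norm, so boundedness in $\|\cdot\|_\xi$ is exactly the claim.

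With this in hand I would invoke the stability theorem. The scaling limit of the mean field is $h_\infty(V)=\lim_{c\to\infty}h_\theta(cV)/c=(P_\pi-I)V$, in which the bounded term $g_\pi$ disappears; and $W(V):=\|V\|_\xi$ is a common Lyapunov function for the family of ODEs $\dot V=(P_\pi-I)V$, since the contraction $\|P_\pi V\|_\xi\le\beta\|V\|_\xi$ forces $W$ to decrease along trajectories at a rate bounded below uniformly in $\theta$. Hence the origin is globally asymptotically stable, uniformly over $\theta\in[-\theta^0,\theta^0]$. Combined with the martingale-noise bound and the asynchronous step-size conditions of Assumption~\ref{a3}(i)--(iv),(vi), the Borkar--Meyn criterion in the asynchronous form of \cite{borkar1998,konda1999} then gives $\sup_n\|V_n\|_\xi<\infty$, and therefore $\sup_n\|V_n\|<\infty$, almost surely.

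The main obstacle is that the driving field $h_{\theta_n}$ is time-varying---the actor $\theta_n$ drifts while the critic updates---and the updates are asynchronous, so no single fixed $h$ is available to feed a textbook stability theorem. This is resolved precisely by the $\xi$-based contraction, which produces one Lyapunov function valid for the entire family $\{h_\theta:\theta\in[-\theta^0,\theta^0]\}$ and makes the stability uniform; the projection $\Gamma_{\theta^0}$ keeps the softmax policies $\pi_\theta$ in a compact set, so this uniformity is genuine rather than assumed. The remaining steps---the conditional second-moment bound on $M_{n+1}$ and wiring the asynchronous clock conditions into the hypotheses of the cited stability result---are the more routine checks.
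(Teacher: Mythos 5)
Your central structural input---the weighted sup-norm contraction $\|P_\pi V\|_\xi\leq\beta\|V\|_\xi$ with weights $\xi$ taken from the auxiliary all-ones-cost MDP and $\beta=\max_i(\xi(i)-1)/\xi(i)<1$, uniform over all policies in the compact softmax class---is exactly the key estimate in the paper's proof, and your derivation of it from the auxiliary Bellman equation is correct. Where you diverge is in the black box you feed it to: the paper writes the critic recursion as $V_{n+1}(i)=V_n(i)+a(i,n)[F^n_i(V_n)-V_n(i)+w_i(n)]$ and invokes Theorem~1 of \cite{tsitsiklis1994}, which is tailor-made for asynchronous stochastic approximation with a weighted-max-norm pseudo-contraction $\|F^n(V)\|_\xi\leq D+\beta\|V\|_\xi$ and delivers boundedness (and convergence) directly, with the time-varying policy $\pi_{\theta_n}$ causing no trouble because the contraction modulus is uniform in $\theta$. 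You instead route through a Borkar--Meyn scaling argument: pass to $h_\infty(V)=(P_\pi-I)V$ and use $\|\cdot\|_\xi$ as a common Lyapunov function. That strategy can be made to work (one has $\|e^{(P_\pi-I)t}V\|_\xi\leq e^{-(1-\beta)t}\|V\|_\xi$, so the origin is uniformly globally exponentially stable for the whole family), but as written there is a gap in the final invocation: neither \cite{borkar1998} nor \cite{konda1999} contains an ``asynchronous Borkar--Meyn criterion''---both \emph{assume} boundedness of the iterates as a hypothesis, which is precisely what is to be proved here. You would need to cite an actual asynchronous extension of the Borkar--Meyn theorem (and address the nondifferentiability of $\|\cdot\|_\xi$, or replace it by the exponential decay estimate), or simply do what the paper does and apply Tsitsiklis's theorem, for which your contraction estimate is already the entire nontrivial hypothesis. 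The remaining checks you defer (martingale second-moment bound, step-size conditions) match the paper's.
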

\begin{proof}
We use Theorem 1 of \cite{tsitsiklis1994} to prove this. Our $V_n$ update \eqref{critic2} can be written as: For $i \in S^-$,
\[
\begin{split}
V_{n+1}(i)=V_n(i)+a(i,n)\big[F^n_i(V_n)-V_n(i)+w_i(n)\big],
\end{split}
\]
with $V_n(i_0)=0,\forall n$. Here $a(i,n)=a(\nu_1(i,n))I\{Y_n=i\}$. $F^n_i(V)$ for $V$ as in \eqref{V-def} is defined as:
\[
\begin{split}
&F_i^n(V)=\sum_{u \in A}\pi_{\theta_n}(i,u)\Big[\sum_{j \in S} p(i,u,j)g(i,u,j)\\
&+\sum_{j\in S^-} p(i,u,j)V(j)\Big].
\end{split}
\]
The noise term $w_i(n)$ is defined as:
\[
\begin{split}
&w_i(n)=g(i,\chi_n(i),\eta^1_n(i,\chi_n(i)))
+V_n(\eta^1_n(i,\chi_n(i)))\\
&-E[g(i,\chi_n(i),\eta^1_n(i,\chi_n(i)))+V_n(\eta^1_n(i,\chi_n(i)))|\mathcal{F}_n].
\end{split}
\]
Here $F^n_i$ is used instead of $F_i$ but it does not affect the conclusions.
By Assumption~\ref{a3} for all $i$, $\sum_na(i,n)=\infty$ a.s, $\sum_na(i,n)^2\leq \bar{C}$ for some $\bar{C}>0$. It is easy to see that $E[w_i(n)|\mathcal{F}_n]=0$ and:
\[
E[w_i(n)^2|\mathcal{F}_n]\leq A+B\max_j\max_{k\leq n}|V_k(j)|^2 \text{ a.s.}
\]
for some $A,B>0$. Let $F^n(V)=(F^n_i(V),i \in S^-)^T$. It can be verified that 
\[
\begin{split}
||F^n(V)||_\xi\leq D+\beta||V||_\xi,
\end{split}
\]
where,
\begin{equation}
\beta=\max_{i\in S^-}\frac{\xi(i)-1}{\xi(i)} \text{ and } ||V||_\xi=\max_{i\in S^-}\frac{|V(i)|}{\xi(i)}.
\end{equation}
The claim follows by Theorem 1 of \cite{tsitsiklis1994}.
\end{proof}
By Lemma~\ref{stability-lemma}, (A2) of \cite{konda1999} is verified.
For $(i,u)\in S^-\times A$, and for $V$ as in \eqref{V-def} let:
\[
k_{iu}(V) = \sum_{j \in S} p(i,a,j)g(i,a,j) +\sum_{j\in S^-} p(i,u,j)V(j)- V(i).
\]
Let the directional derivative of $\Gamma_{\theta^0}$ be defined as:
\begin{equation}\label{dir-def}
\bar{\Gamma}_{\theta^0}(x;y) = \lim_{\delta\downarrow 0} \left(\frac{\Gamma_{\theta^0}(x + \delta y) - x}{\delta}\right).
\end{equation}
For both AC and CA recursions, \eqref{critic2} can be written as follows: For $i \in S^-$,
\[
\begin{split}
&V_{n+1}(i) = V_n(i) + \bar{a}(n)I\{Y_n=i\}\big[\sum_{u \in A}\pi_{\theta_n}(i,u)\\
&k_{iu}(V_n)+N_{n+1}(i)\big],
\end{split}
\]
with $V_n(i_0)=0, \forall n$. For both AC and CA recursions, \eqref{actor2} can be written as: For $i \in S^-,u\in A$,
\[
\begin{split}
&\theta_{n+1}(i,u) =\Gamma_{\theta^0}\big(\theta_n(i,u) + \bar{b}(n) I\{Z_n=(i,u)\}\\
&\big[-k_{ia}(V_n)+ M_{n+1}(i,u)\big]\big).
\end{split}
\]
For $n\geq 0$, define the $\{\mathcal{F}_n\}$-adapted sequences:
\[
\begin{split}
&N_{n+1}(i) = g(i,\phi_n(i),\eta^1_n(i,\chi_n(i)))
+V_n(\eta^1_n(i,\chi_n(i)))\\
&-V_n(i)-\sum_{u \in A} \pi_{\theta_n}(i,u)k_{iu}(V_n),
\end{split}
\]
\[
\begin{split}
&M_{n+1}(i,u) = V_n(i)-g(i,u,\eta^2_n(i,u))-V_n(\eta^2_n(i,u))\\
&+k_{iu}(V_n).
\end{split}
\]
They are martingale difference sequences. We have the following lemma.
\begin{lemma}\label{noise-lemma}
For some $C_1,C_2>0$, 
\[||N_{n+1}||\leq C_1(1+||V_n||)\text{ a.s},\]
\[||M_{n+1}||\leq C_2(1+||V_n||)\text{ a.s},\]
where $N_{n+1}=[N_{n+1}(i),i \in S^-]^T$ and $M_{n+1}=[M_{n+1}(i,u),i \in S^-,u \in A]^T$.
\end{lemma}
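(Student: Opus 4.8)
The plan is to observe that both $N_{n+1}(i)$ and $M_{n+1}(i,u)$ are finite sums of terms, each of which is either bounded by an absolute constant (the single-stage costs) or is linear in $V_n$ (the value-function lookups). Hence a direct application of the triangle inequality, exploiting the finiteness of $S$ and $A$ and the fact that the transition weights sum to one, will immediately yield the claimed affine-in-$\|V_n\|$ bounds.

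First I would record the only structural facts needed. Since $S$ and $A$ are finite, the single-stage cost is bounded, say $|g(i,u,j)|\le G$ for all $(i,u,j)$ and some $G<\infty$. The transition probabilities satisfy $\sum_{j\in S}p(i,u,j)=1$, so in particular $\sum_{j\in S^-}p(i,u,j)\le 1$. Finally, by the convention $V_n(i_0)=0\le \|V_n\|$, every value lookup $V_n(\cdot)$ appearing in the recursions is bounded in absolute value by $\|V_n\|$, even when the sampled next state $\eta^1_n$ or $\eta^2_n$ equals the terminal state $i_0$ (here $\|\cdot\|$ is the max norm over $S^-$, all norms being equivalent in this finite-dimensional setting).

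Next I would bound the common building block $k_{iu}(V_n)$. Writing out its definition and applying the triangle inequality together with the facts above gives
\[
|k_{iu}(V_n)|\le \sum_{j\in S}p(i,u,j)|g(i,u,j)| + \sum_{j\in S^-}p(i,u,j)|V_n(j)| + |V_n(i)| \le G + 2\|V_n\|,
\]
uniformly in $(i,u)$. Since $\sum_{u\in A}\pi_{\theta_n}(i,u)=1$, the policy-averaged quantity $\sum_{u}\pi_{\theta_n}(i,u)k_{iu}(V_n)$ obeys the same bound. I would then substitute these estimates into the definitions of the two martingale differences. For the critic noise the triangle inequality yields
\[
|N_{n+1}(i)|\le G + \|V_n\| + \|V_n\| + \big(G + 2\|V_n\|\big) = 2G + 4\|V_n\|,
\]
and an identical grouping for the actor noise gives
\[
|M_{n+1}(i,u)|\le \|V_n\| + G + \|V_n\| + \big(G + 2\|V_n\|\big) = 2G + 4\|V_n\|.
\]
Taking the maximum over $i\in S^-$ (respectively over $(i,u)\in S^-\times A$) and setting $C_1=C_2=2G+4$ delivers $\|N_{n+1}\|\le C_1(1+\|V_n\|)$ and $\|M_{n+1}\|\le C_2(1+\|V_n\|)$ almost surely.

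There is no genuine analytic obstacle: the statement is purely a consequence of the finiteness of the state–action space (which bounds $g$) and the affine-in-$V$ structure of the Bellman-type terms. The only points requiring a little care are to remember that the value lookups may hit $i_0$, which is harmless because $V_n(i_0)=0$, and to keep track of which norm $\|\cdot\|$ denotes so the constants are stated consistently; since the space is finite-dimensional, any change of norm only rescales $C_1,C_2$ by equivalence constants.
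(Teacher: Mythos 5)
Your proof is correct and is exactly the argument the paper has in mind: the paper's own proof is just the one-line remark that the bounds are ``easy to see from the definition of $N_{n+1}$ and $M_{n+1}$,'' and your write-up supplies precisely the omitted details (boundedness of $g$ over the finite state--action space, $\sum_j p(i,u,j)=1$, the convention $V_n(i_0)=0$, and the triangle inequality applied to the affine-in-$V_n$ terms). Nothing further is needed.
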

\begin{proof}
Easy to see from the definition of $N_{n+1}$ and $M_{n+1}$.
\end{proof}
By Lemma~\ref{noise-lemma} and Lemma~\ref{stability-lemma}, (A4) of \cite{konda1999} is verified. This basically means our noise terms will converge and now we can deal with the limiting Ordinary Differential Equations (ODEs) and Differential Inclusions (DIs). 

\subsection{Convergence of Actor Critic}
Since our actor is updated along the slower timescale, we let $\theta_n \approx \theta$ for the analysis of the faster recursion with $\theta$ as in \eqref{theta-def}. 

Let $R_\theta$ be the vector with elements: 
\begin{equation}\label{R-def}
\begin{split}
R_\theta(i)=\sum_{u \in A} \pi_\theta(i,u)\sum_{j\in S}p(i,u,j)g(i,u,j),i \in S^-.
\end{split}
\end{equation}
Let $P_\theta$ be the matrix with elements: 
\begin{equation}\label{P-def}
P_\theta(i,j)=\sum_{u \in A} \pi_\theta(i,u)p(i,u,j),i,j \in S^-.
\end{equation}
 Consider now the ODE:
\begin{equation}
	\label{ode-c}
	\dot{V} = R_\theta +P_\theta V-V.
\end{equation}
We have the following:
\begin{lemma}
\label{lem1}
$V^\theta=(I-P_\theta)^{-1}R_\theta$ is the unique globally asymptotically stable equilibrium of \eqref{ode-c}. Also $V^\theta$ is Lipschitz continuous function of $\theta$.
\end{lemma}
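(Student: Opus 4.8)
The plan is to first show that, in the weighted sup-norm $\|\cdot\|_\xi$ introduced in the proof of Lemma~\ref{stability-lemma}, the matrix $P_\theta$ is a contraction with a modulus $\beta<1$ that is \emph{independent} of $\theta$, and then to read off existence, uniqueness, global asymptotic stability, and Lipschitz continuity in turn. \textbf{Uniform contraction.} Since $\xi$ solves $\xi(i)=\max_{u}\bigl[1+\sum_{j\in S^-}p(i,u,j)\xi(j)\bigr]$, for every $\theta$ and every $i\in S^-$ we have $(P_\theta\xi)(i)=\sum_{u}\pi_\theta(i,u)\sum_{j\in S^-}p(i,u,j)\xi(j)\le\xi(i)-1\le\beta\,\xi(i)$, where $\beta=\max_{i}(\xi(i)-1)/\xi(i)<1$. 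Because $P_\theta\ge 0$ entrywise, this bound on $P_\theta\xi$ propagates to $\|P_\theta V\|_\xi\le\beta\|V\|_\xi$ for all $V$. This is precisely where Assumption~\ref{a1} is used: properness is what makes the auxiliary values $\xi(i)$ finite and forces $\beta<1$, and since $\xi$ comes from the policy-independent auxiliary MDP, the same $\beta$ works for every softmax policy.

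\textbf{Equilibrium.} The contraction gives $\rho(P_\theta)\le\beta<1$ for the spectral radius, so $I-P_\theta$ is invertible and the Neumann series yields $\|(I-P_\theta)^{-1}\|_\xi\le 1/(1-\beta)$ uniformly in $\theta$. Setting $\dot V=0$ in \eqref{ode-c} gives $(I-P_\theta)V=R_\theta$, whose unique solution is $V^\theta=(I-P_\theta)^{-1}R_\theta$. \textbf{Global asymptotic stability.} Rewriting \eqref{ode-c} as the linear ODE $\dot V=-(I-P_\theta)(V-V^\theta)$, every eigenvalue $\lambda$ of $P_\theta$ satisfies $|\lambda|\le\rho(P_\theta)<1$, so $-(I-P_\theta)$ has spectrum $\{-(1-\lambda)\}$ with $\mathrm{Re}(1-\lambda)\ge 1-|\lambda|>0$. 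Hence $-(I-P_\theta)$ is Hurwitz and $V^\theta$ is globally asymptotically stable. (Equivalently, one can verify directly via Step~1 that $t\mapsto\|V(t)-V^\theta\|_\xi$ decays.)

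\textbf{Lipschitz continuity.} Under Assumption~\ref{a2} the softmax $\theta\mapsto\pi_\theta(i,\cdot)$ has a globally bounded Jacobian, and the costs and transition probabilities are fixed, so from \eqref{R-def} and \eqref{P-def} both $R_\theta$ and $P_\theta$ are bounded and globally Lipschitz in $\theta$. Adding and subtracting and then applying the resolvent identity $(I-P_{\theta_1})^{-1}-(I-P_{\theta_2})^{-1}=(I-P_{\theta_1})^{-1}(P_{\theta_1}-P_{\theta_2})(I-P_{\theta_2})^{-1}$ gives
\[
V^{\theta_1}-V^{\theta_2}=(I-P_{\theta_1})^{-1}\bigl[(R_{\theta_1}-R_{\theta_2})+(P_{\theta_1}-P_{\theta_2})V^{\theta_2}\bigr].
\]
The uniform resolvent bound $\|(I-P_{\theta_1})^{-1}\|_\xi\le 1/(1-\beta)$, the uniform bound $\sup_\theta\|V^\theta\|_\xi<\infty$, and the Lipschitz bounds on $P_\theta,R_\theta$ then combine to give $\|V^{\theta_1}-V^{\theta_2}\|\le L\|\theta_1-\theta_2\|$ for a constant $L$ independent of $\theta_1,\theta_2$.

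I expect the main obstacle to be Step~1: securing a contraction modulus $\beta<1$ that is uniform over all $\theta$. Everything downstream (invertibility, the Hurwitz/GAS argument, and the resolvent-based Lipschitz estimate) is routine linear algebra and standard linear-ODE stability once the uniform contraction is in hand; the crux is recognizing that defining $\xi$ through the policy-independent auxiliary MDP makes the same $\beta$ valid for every softmax policy simultaneously.
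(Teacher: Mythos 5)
Your proof is correct, but it reaches the two claims by a genuinely different route than the paper. For the stability claim, the paper applies the Gershgorin circle theorem to $P_\theta^{|S^-|}$: by \eqref{reach} the row sums of $P_\theta^{|S^-|}$ are all strictly less than $1$, so $|\lambda_P|^{|S^-|}<1$ for every eigenvalue $\lambda_P$ of $P_\theta$, whence $P_\theta-I$ is Hurwitz. You instead extract a weighted-sup-norm contraction $\|P_\theta V\|_\xi\le\beta\|V\|_\xi$ from the auxiliary MDP's Bellman equation, which gives the same spectral conclusion but with a contraction modulus $\beta$ that is \emph{uniform} in $\theta$ --- a strictly stronger statement than what this lemma needs, though it is exactly the quantity the paper itself introduces and reuses in Lemmas~\ref{stability-lemma} and~\ref{lem-main}, so your argument unifies those threads. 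For the Lipschitz claim, the paper invokes Cramer's rule to get continuous differentiability of $\theta\mapsto V^\theta$ and then uses compactness of the $\theta$-domain (the projection onto $[-\theta^0,\theta^0]$) to conclude; your resolvent-identity computation, combined with the uniform bound $\|(I-P_\theta)^{-1}\|_\xi\le 1/(1-\beta)$ and the globally bounded Jacobian of the softmax, produces an explicit global Lipschitz constant without appealing to compactness at all. Both proofs are sound; yours buys uniformity and explicit constants, while the paper's is shorter and leans on the boundedness of the actor iterates that is available anyway.
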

\begin{proof}
For a matrix $A$ with elements $\{a_{ij}\}$, let one of the eigenvalues be $\lambda_A$. Then by Gershgorin Circle Theorem, $|\lambda_A|\leq\max_i \sum_j |a_{ij}|$. Let one of the eigenvalues of $P_\theta$ be $\lambda_P$. Now by \eqref{reach},  $P_\theta^{|S^-|}$ has eigenvalues $\lambda_P^{|S^-|}$ with $|\lambda_P^{|S^-|}
|<1$. Hence $|\lambda_P|<1$. Hence the matrix $P_\theta-I$ has all eigenvalues with negative real parts. The first claim follows.

Using Cramer's rule one can verify that $V^\theta$ is continuously differentiable in $\theta$. As $\theta$ lies in a compact set, $V^\theta$ is Lipschitz continuous.
\end{proof}

\begin{remark}
Let us define the function $f(\theta,V)$, which is the mean field of \eqref{critic2}:
\[
f(\theta,V)=[\sum_{u \in A}\pi_{\theta}(i,u)k_{iu}(V),i\in S^-]^T.
\]
It can be seen that $f(\theta,V)$ is not a Lipschitz continuous function but a Locally Lipschitz function for $||V||\leq B',B'>0$. But since by Lemma~\ref{lem1}, the ODE \eqref{ode-c} has a globally asymptotically stable equilibrium, the ODE takes values in a bounded set and the arguments of Section 4 of \cite{konda1999} still hold.
\end{remark}

The policy update \eqref{actor2} is on the slower timescale, hence it sees the critic update \eqref{critic2} as quasi-equilibrated. Then the policy update \eqref{actor2} can be written as: For $i \in S^-,u \in A$,
\begin{equation}\label{equilibriated-actor2}
\begin{split}
&\theta_{n+1}(i,u)=\Gamma_{\theta^0}\big(\theta_n(i,u)+\bar{b}(n)I\{Z_n=(i,u)\}\\
&\big[-k_{iu}(V^{\theta_n})+M_{n+1}(i,u)+\varepsilon'_n(i,u)\big]\big).
\end{split}
\end{equation}
By Lemma~\ref{lem1}, $\varepsilon'_n(i,u)\rightarrow 0$.
Let us consider the ODE: For $i \in S^-,u \in A$,
\begin{equation}\label{actor_ode}
\dot{\theta}(i,u)=\bar{\Gamma}_{\theta^0}(\theta(i,u);k_{iu}(V^\theta)).
\end{equation}
See \eqref{dir-def} for the definition of $\bar{\Gamma}_{\theta^0}(.;.)$. Let,
\[
\gamma^1_{iu}(\theta) :=\left\{
\begin{array}{ll}
0 & \mbox{ if } \theta(i,u) = \theta^0 \mbox{ and } k_{iu}(V^\theta) \leq 0,\\
   & \mbox{ or } \theta(i,u) = -\theta^0 \mbox{ and } k_{iu}(V^\theta) \geq 0,\\
1  & \mbox { otherwise.}
\end{array}
\right.
\]
Thus, the ODE \eqref{actor_ode} takes the form:
\begin{equation}
	\dot{\theta}(i,u) = -k_{iu}(V^\theta)\gamma^1_{iu}(\theta).
\end{equation}
\begin{lemma}\label{lem2}
$\displaystyle \sum_{i\in S^-} V^\theta(i)$ is a Lyapunov function of \eqref{actor_ode}.
\end{lemma}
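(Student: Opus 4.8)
The plan is to show that $L(\theta):=\sum_{i\in S^-}V^\theta(i)=\mathbf{1}^T V^\theta$ strictly decreases along the trajectories of \eqref{actor_ode} except at the equilibria, where $\mathbf{1}$ denotes the all-ones vector over $S^-$. Since $V^\theta$ is continuously differentiable in $\theta$ (Lemma~\ref{lem1}), the chain rule gives
\[
\frac{d}{dt}L(\theta)=\sum_{i\in S^-,u\in A}\frac{\partial L}{\partial\theta(i,u)}\,\dot\theta(i,u)=-\sum_{i,u}\frac{\partial L}{\partial\theta(i,u)}\,k_{iu}(V^\theta)\,\gamma^1_{iu}(\theta).
\]
Hence everything reduces to finding a workable expression for $\partial L/\partial\theta(i,u)$, which is precisely an SSP policy-gradient computation. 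First I would differentiate the policy-evaluation Bellman equation $V^\theta=R_\theta+P_\theta V^\theta$ with respect to $\theta(i,u)$, obtaining $\frac{\partial V^\theta}{\partial\theta(i,u)}=g_{iu}+P_\theta\frac{\partial V^\theta}{\partial\theta(i,u)}$, where $g_{iu}:=\frac{\partial R_\theta}{\partial\theta(i,u)}+\frac{\partial P_\theta}{\partial\theta(i,u)}V^\theta$.

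The key simplification is softmax locality: $R_\theta(k)$ and $P_\theta(k,\cdot)$ depend on $\theta(i,u)$ only when $k=i$, so $g_{iu}$ is supported on the single coordinate $i$, with
\[
g_{iu}(i)=\sum_{u'\in A}\frac{\partial\pi_\theta(i,u')}{\partial\theta(i,u)}\Big(\sum_{j\in S}p(i,u',j)g(i,u',j)+\sum_{j\in S^-}p(i,u',j)V^\theta(j)\Big)=\sum_{u'\in A}\frac{\partial\pi_\theta(i,u')}{\partial\theta(i,u)}\,Q^\theta(i,u').
\]
Because $I-P_\theta$ is invertible (Lemma~\ref{lem1}), the recursion solves to $\frac{\partial V^\theta}{\partial\theta(i,u)}=(I-P_\theta)^{-1}g_{iu}$, and therefore
\[
\frac{\partial L}{\partial\theta(i,u)}=\mathbf{1}^T(I-P_\theta)^{-1}g_{iu}=\eta_\theta(i)\,g_{iu}(i),\qquad \eta_\theta(i):=\big[\mathbf{1}^T(I-P_\theta)^{-1}\big]_i.
\]
Using the softmax identity $\frac{\partial\pi_\theta(i,u')}{\partial\theta(i,u)}=\pi_\theta(i,u')\big(I\{u'=u\}-\pi_\theta(i,u)\big)$ together with $\sum_{u'}\pi_\theta(i,u')Q^\theta(i,u')=V^\theta(i)$ collapses $g_{iu}(i)$ to $\pi_\theta(i,u)\big(Q^\theta(i,u)-V^\theta(i)\big)=\pi_\theta(i,u)\,k_{iu}(V^\theta)$, yielding the clean formula $\frac{\partial L}{\partial\theta(i,u)}=\eta_\theta(i)\,\pi_\theta(i,u)\,k_{iu}(V^\theta)$.

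Substituting back into the chain rule gives
\[
\frac{d}{dt}L(\theta)=-\sum_{i\in S^-,u\in A}\eta_\theta(i)\,\pi_\theta(i,u)\,\gamma^1_{iu}(\theta)\,\big(k_{iu}(V^\theta)\big)^2\le 0.
\]
The inequality follows since $\pi_\theta(i,u)>0$ (softmax is strictly positive), $\gamma^1_{iu}(\theta)\in\{0,1\}$, and $\eta_\theta(i)\ge 1>0$; the last bound comes from the Neumann expansion $(I-P_\theta)^{-1}=\sum_{m\ge0}P_\theta^m$, whose $m=0$ term contributes $\mathbf{1}$ to $\eta_\theta$ while every later term is nonnegative. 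Finally, since each summand is nonnegative and the weights $\eta_\theta(i)\pi_\theta(i,u)$ are strictly positive, $\frac{d}{dt}L(\theta)=0$ holds if and only if $\gamma^1_{iu}(\theta)\,k_{iu}(V^\theta)=0$ for all $(i,u)$, i.e. exactly when $\dot\theta(i,u)=0$ for all $(i,u)$, which is the equilibrium condition of \eqref{actor_ode}. Thus $L$ is a strict Lyapunov function.

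I expect the main obstacle to be establishing the gradient identity $\partial L/\partial\theta(i,u)=\eta_\theta(i)\pi_\theta(i,u)k_{iu}(V^\theta)$, i.e. proving the policy-gradient theorem in the SSP setting: it needs the differentiability and invertibility from Lemma~\ref{lem1}, careful exploitation of softmax locality so that only the coordinate $i$ sources the linear recursion, and the positivity of the occupation-type weights $\eta_\theta$ from the Neumann series, which is what upgrades a non-strict $\dot L\le0$ to strict decrease off the equilibrium set. The projected (directional-derivative) terms encoded by $\gamma^1_{iu}$ require no extra effort, as they only zero out individual coordinates and so preserve the nonnegativity of each summand.
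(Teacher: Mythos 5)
Your proof is correct and follows essentially the same route as the paper: both reduce the claim to the policy-gradient identity $\partial_{\theta(i,u)}\sum_{s}V^\theta(s)=\eta_\theta(i)\,\pi_\theta(i,u)\,k_{iu}(V^\theta)$ with occupation-type weights $\eta_\theta(i)=\big[\mathbf{1}^T(I-P_\theta)^{-1}\big]_i=\sum_{i'}\sum_{k\ge 0}Pr(i'\to i,k,\pi)$, and then conclude $\dot L\le 0$ from the chain rule. The only difference is that the paper cites this identity from the policy-gradient theorem (Chapter 13 of Sutton--Barto) while you derive it by differentiating the Bellman equation and inverting $I-P_\theta$; your explicit Neumann-series bound $\eta_\theta(i)\ge 1$ makes the ``equality only at equilibria'' step more transparent than the paper's assertion.
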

\begin{proof}
Let $Pr(i\rightarrow s,k,\pi)$ be the probability of transitioning from state i to state s in k steps following policy $\pi$. Let $Q^\theta$ as in \eqref{Q-def}.
From chapter 13 of \cite{SuttonBarto2018},
\[
\begin{split}
&\nabla_\theta \sum_{i\in S^-} V^\theta(i)=\sum_{i \in S^-}\sum_{s\in S^-}\sum_{k=0}^\infty Pr(i\rightarrow s,k,\pi)\\
&\sum_{u \in A} \nabla_\theta\pi_\theta(s,u)Q^\theta(s,u)\\
&=\Big[\sum_{i \in S^-}\sum_{k=0}^\infty Pr(i\rightarrow s,k,\pi)\pi_\theta(s,u)k_{su}(V^\theta),\\
&s\in S^-,u\in A\Big]^T.
\end{split}
\]
The last equality is by Assumption~\ref{a2}. Hence:
\[
\frac{d \displaystyle \sum_{i\in S^-}V^\theta(i)}{dt}=\Big(\nabla_\theta\sum_{i\in S^-} V^\theta(i)\Big)^T\dot{\theta}\leq 0.
\]
with equality only at equilibrium points of \eqref{actor_ode}.
\end{proof}
By Lemma~\ref{lem1} and Lemma~\ref{lem2}, (A3) of \cite{konda1999} is verified.
Let $\hat{\theta}$ be a stable equilibrium point of \eqref{actor_ode}.
\begin{lemma}\label{lem-main}
For $\epsilon>0$ there exists $\bar{\theta}$ such that for all $\theta^0\geq\bar{\theta}$ we have $V^*(i)-\epsilon\leq  V^{\hat{\theta}}(i)\leq V^*(i)+\epsilon$, $\forall i \in S^-$.
\end{lemma}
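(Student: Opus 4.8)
The plan is to show that at the equilibrium $\hat\theta$ of the actor ODE \eqref{actor_ode}, the greedy (lowest-cost) actions carry almost all of the soft-max probability mass once $\theta^0$ is large, so that $V^{\hat\theta}$ nearly satisfies the Bellman optimality equation, and then to turn this into a bound on $\|V^{\hat\theta}-V^*\|$ by the weighted-sup-norm contraction already available from Lemma~\ref{stability-lemma}. Writing $Q^{\hat\theta}(i,u)=k_{iu}(V^{\hat\theta})+V^{\hat\theta}(i)$ and recalling that $V^{\hat\theta}=T_{\hat\theta}V^{\hat\theta}$, the object I would control is the per-state Bellman error $\Delta_i:=V^{\hat\theta}(i)-(TV^{\hat\theta})(i)=-\min_{u\in A}k_{iu}(V^{\hat\theta})\ge 0$, where nonnegativity holds because $\sum_{u}\pi_{\hat\theta}(i,u)k_{iu}(V^{\hat\theta})=0$ forces $\min_u k_{iu}(V^{\hat\theta})\le 0$. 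I note that although $\hat\theta$ is assumed stable, only the equilibrium identity $\dot\theta(i,u)=-k_{iu}(V^{\hat\theta})\gamma^1_{iu}(\hat\theta)=0$ is needed.

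First I would read off the sign structure of the equilibrium from this identity: for each $(i,u)$, either $k_{iu}(V^{\hat\theta})=0$ or $\gamma^1_{iu}(\hat\theta)=0$, and inspecting the definition of $\gamma^1_{iu}$ shows that $k_{iu}(V^{\hat\theta})<0$ forces $\hat\theta(i,u)=\theta^0$ while $k_{iu}(V^{\hat\theta})>0$ forces $\hat\theta(i,u)=-\theta^0$. Hence at a state $i$ with $\Delta_i>0$ the greedy action $u^\star$ (which has $k_{iu^\star}(V^{\hat\theta})=-\Delta_i<0$) satisfies $\hat\theta(i,u^\star)=\theta^0$, so the soft-max denominator is at least $e^{\theta^0}$, giving $\pi_{\hat\theta}(i,u^\star)\ge 1/|A|$, whereas every action with positive advantage sits at $-\theta^0$ and therefore has $\pi_{\hat\theta}(i,u)\le e^{-2\theta^0}$.

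Next I would combine these estimates with $\sum_u\pi_{\hat\theta}(i,u)k_{iu}(V^{\hat\theta})=0$. Splitting this identity by the sign of $k_{iu}$ yields $\pi_{\hat\theta}(i,u^\star)\Delta_i\le\sum_{u:k_{iu}<0}\pi_{\hat\theta}(i,u)|k_{iu}|=\sum_{u:k_{iu}>0}\pi_{\hat\theta}(i,u)k_{iu}$, and using the uniform bound $|k_{iu}(V^{\hat\theta})|\le K$ (finite since $g$ is bounded and $V^{\hat\theta}$ is bounded by Lemma~\ref{lem1}) gives $\Delta_i\le |A|\,K\,(|A|-1)\,e^{-2\theta^0}=:\delta(\theta^0)$ for every $i$, so that $\|V^{\hat\theta}-TV^{\hat\theta}\|_\infty\le\delta(\theta^0)\to 0$ as $\theta^0\to\infty$.

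Finally I would convert this Bellman residual into a bound on the value itself, which is where the auxiliary weights $\xi$ enter. Reusing $\xi$ and $\beta=\max_{i}(\xi(i)-1)/\xi(i)<1$ from the proof of Lemma~\ref{stability-lemma}, the optimality operator $T$ is a $\beta$-contraction in $\|\cdot\|_\xi$; writing $V^{\hat\theta}=TV^{\hat\theta}+\Delta$ with $0\le\Delta\le\delta(\theta^0)\mathbf 1$ and $V^*=TV^*$ gives $\|V^{\hat\theta}-V^*\|_\xi\le\delta(\theta^0)/(1-\beta)$, hence $\|V^{\hat\theta}-V^*\|_\infty\le\frac{\max_i\xi(i)}{1-\beta}\,\delta(\theta^0)$. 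Taking $\bar\theta$ large enough that this bound is at most $\epsilon$ for all $\theta^0\ge\bar\theta$ completes the argument. The main obstacle is the equilibrium step: one must argue carefully from the projected drift $\bar\Gamma_{\theta^0}$ and the piecewise definition of $\gamma^1_{iu}$ that positive-advantage actions are pinned at $-\theta^0$, and then verify that the soft-max mass on those actions decays like $e^{-2\theta^0}$ uniformly in the coupled equilibrium value $V^{\hat\theta}$; the contraction step is routine once $\xi$ is in hand.
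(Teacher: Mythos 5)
Your proposal is correct and follows essentially the same route as the paper's proof: both extract the sign structure of the equilibrium ($k_{iu}(V^{\hat\theta})<0$ pins $\hat{\theta}(i,u)$ at $\theta^0$, $k_{iu}(V^{\hat\theta})>0$ pins it at $-\theta^0$), combine it with the fixed-point identity $\sum_u \pi_{\hat\theta}(i,u)k_{iu}(V^{\hat\theta})=0$ to show the Bellman residual is $O(e^{-2\theta^0})$, and then convert this to a bound on $\|V^{\hat\theta}-V^*\|$ via the $\beta$-contraction of $T$ in the $\xi$-weighted norm. The only difference is bookkeeping: the paper derives the exact relation $\sum_u k_{iu}^-=e^{-2\theta^0}\sum_u k_{iu}^+$ and bounds $\max_{i,u}k_{iu}^-$, whereas you lower-bound $\pi_{\hat\theta}(i,u^\star)$ by $1/|A|$ and upper-bound the positive-advantage mass by $e^{-2\theta^0}$, yielding the same $e^{-2\theta^0}$ decay.
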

\begin{proof}
The proof is adapted from Lemma 5.12 of \cite{konda1999}. For some $V$ as in \eqref{V-def}:
\[
\begin{split}
||V-V^*||_\xi&\leq ||V-TV||_\xi+||TV-V^*||_\xi\\
&=||V-TV||_\xi+||TV-TV^*||_\xi\\
&\leq ||V-TV||_\xi+\beta||V-V^*||_\xi.
\end{split}
\]
Hence $||V-V^*||_\xi\leq (1-\beta)^{-1}||TV-V||_\xi$. For $i \in S^-, u\in A$ let us define:
\[
k_{iu}^-(V)=\max(0,-k_{iu}(V)),k_{iu}^+(V)=\max(0,k_{iu}(V)).
\]
Now,
\[
\begin{split}
||TV^{\hat{\theta}}-V^{\hat{\theta}}||_\xi&=\max_{i \in S^-}\frac{\displaystyle|\min_{u \in A}k_{iu}(V^{\hat{\theta}})|}{\xi(i)}\\
&\leq \frac{\displaystyle\max_{i \in S^-,u \in A}k_{iu}^-(V^{\hat{\theta}})}{\xi_{min}}
\end{split}
\]
where $\displaystyle\xi_{min}=\min_{i\in S^-}\xi(i)$. It can be seen that:
\[
\pi_{\hat{\theta}}(i,u)k_{iu}(V^{\hat{\theta}})=\frac{\exp(-\theta^0) k_{iu}^+(V^{\hat{\theta}})- \exp(\theta^0)k_{iu}^-(V^{\hat{\theta}})}{\displaystyle\sum_{u' \in A}\exp(\hat{\theta}(i,u'))}
\]
Summing over all $u \in A$, LHS becomes $0$, hence:
\[
\sum_{u\in A} k_{iu}^-(V^{\hat{\theta}})=\exp(-2\theta^0)\sum_{u\in A} k_{iu}^+(V^{\hat{\theta}})
\]
Let $|g(\cdot,\cdot,\cdot)|\leq B$ and $\displaystyle\xi_{max}=\max_{i\in S^-}\xi(i)$,  
then $\displaystyle\max_{i \in S^-}|V^{\hat{\theta}}(i)|\leq B\xi_{max}$. Then it can be seen that 
\[
|k_{iu}(V^{\hat{\theta}})|\leq B(1+2\xi_{max}).
\]
Let $|A|$ be the cardinality of A. Then,
\[
\begin{split}
&||V-V^*||_\xi\leq \frac{B(1+2\xi_{max})|A|}{\xi_{min}\exp(2\theta^0)(1-\beta)}\\
\Rightarrow &\max_{i \in S^-}|V(i)-V^*(i)|\leq \frac{\xi_{max}B(1+2\xi_{max})|A|}{\xi_{min}\exp(2\theta^0)(1-\beta)}
\end{split}
\]
The claim follows by choosing $\theta^0$ such that the RHS $\leq \epsilon$.
\end{proof}
See Theorem~\ref{main-acca} for main result.
\subsection{Convergence of Critic Actor}
Since our critic is updated on the slower timescale, we let $V_n \approx V$ for the analysis of the faster recursion with $V$ as in \eqref{V-def}. 

Consider now the ODE: For $i \in S^-,u \in A$,
\begin{equation}
	\label{ode-a}
	\dot{\theta}(i,u) = \bar{\Gamma}_{\theta^0}(\theta(i,u);-k_{iu}(V)).
\end{equation}
Let,
\[
\gamma^2_{iu}(\theta) =\left\{
\begin{array}{ll}
0 & \mbox{ if } \theta(i,u) = \theta^0 \mbox{ and } k_{iu}(V) \leq 0,\\
   & \mbox{ or } \theta(i,u) = -\theta^0 \mbox{ and } k_{iu}(V) \geq 0,\\
1  & \mbox { otherwise.}
\end{array}
\right.
\]

Thus, the ODE \eqref{ode-a} takes the form:
\begin{equation}
	\label{ode-b}
	\dot{\theta}(i,u) = -k_{iu}(V)\gamma^2_{iu}(\theta).
\end{equation}

\begin{lemma}
\label{lem3}
The stable attractors of \eqref{ode-b} is the set $\Theta(V)$ consisting of $\theta_V$ that satisfy $\theta_V(i,u) =\theta^0$ for $k_{iu}(V)< 0$,  $\theta_V(i,u)=-\theta^0$ for  $k_{iu}(V)>0$, and $\theta_V(i,u) \in [-\theta^0,\theta^0]$ for $k_{iu}(V)=0$.
\end{lemma}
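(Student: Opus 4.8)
The plan is to exploit the fact that the analysis of \eqref{ode-b} takes place on the faster timescale, where $V$ is held fixed, so that each coefficient $k_{iu}(V)$ is a genuine constant that does not depend on $\theta$. Consequently the right-hand side of \eqref{ode-b} has no coupling across coordinates: the evolution of $\theta(i,u)$ depends only on $\theta(i,u)$ itself (through $\gamma^2_{iu}$) and on the constant $k_{iu}(V)$. The whole system therefore factorizes into $|S^-|\cdot|A|$ independent scalar projected ODEs on the interval $[-\theta^0,\theta^0]$, and it suffices to classify the stable attractor of each scalar equation $\dot{x} = -c\,\gamma(x)$, where $c := k_{iu}(V)$ and $\gamma(x)$ is the corresponding indicator from the definition of $\gamma^2_{iu}$.

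First I would treat the three sign regimes of $c$ separately. If $c<0$, then for $x$ in the open interval $\gamma(x)=1$ and $\dot{x}=-c>0$, so $x$ increases at a constant rate and reaches the upper endpoint $\theta^0$ in finite time; there $\gamma=0$ and the projection pins the trajectory, so $x=\theta^0$ is the unique globally asymptotically stable equilibrium. The case $c>0$ is symmetric and yields $x=-\theta^0$ as the unique globally asymptotically stable equilibrium. If $c=0$ then $\dot{x}\equiv 0$, every point of $[-\theta^0,\theta^0]$ is an equilibrium, and each is Lyapunov (marginally) stable. Reassembling the coordinates, the set of points that are stable equilibria is exactly the set $\Theta(V)$ described in the statement.

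An alternative and perhaps cleaner route that I would present in parallel is a single Lyapunov argument. Define the linear function $L(\theta)=\sum_{i\in S^-,u\in A}k_{iu}(V)\,\theta(i,u)$. Differentiating along \eqref{ode-b} gives $\dot{L} = -\sum_{i,u}k_{iu}(V)^2\,\gamma^2_{iu}(\theta)\le 0$, with equality if and only if for every $(i,u)$ either $k_{iu}(V)=0$ or $\gamma^2_{iu}(\theta)=0$. Inspecting the definition of $\gamma^2_{iu}$, this equality set coincides exactly with the equilibrium set of \eqref{ode-b} and with $\Theta(V)$. Since $L$ is bounded on the compact box $[-\theta^0,\theta^0]^{|S^-||A|}$, LaSalle's invariance principle then forces every trajectory into the largest invariant subset of $\{\dot{L}=0\}$, which is $\Theta(V)$ itself; as $\Theta(V)$ is simultaneously the constrained minimizer set of $L$, this also confirms its stability.

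The main obstacle is not the dynamics, which are essentially linear, but the care required around the nonsmooth projection. The vector field of \eqref{ode-b}, obtained through the directional derivative $\bar{\Gamma}_{\theta^0}$ in \eqref{ode-a}, is discontinuous at the boundary faces of the box, so the equation must be read in the Carath\'eodory/Filippov sense, and I would verify that the scalar solutions constructed above are the admissible ones and that LaSalle applies to this set-valued flow. The second delicate point is the degenerate regime $k_{iu}(V)=0$: here one obtains a continuum of equilibria, none of which is asymptotically stable on its own, so \emph{stable attractor} must be understood in the set/Lyapunov-stability sense rather than as an isolated sink, and I would state this interpretation explicitly to match the characterization $\theta_V(i,u)\in[-\theta^0,\theta^0]$ in the lemma.
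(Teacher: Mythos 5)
Your proposal is correct and fleshes out exactly what the paper's one-line proof (``easy to see from the dynamics of the ODE'') intends: since $V$ is frozen on this timescale the system decouples into scalar projected ODEs with constant drift $-k_{iu}(V)$, and the sign analysis of each coordinate gives precisely $\Theta(V)$. Your supplementary Lyapunov/LaSalle argument and your care with the discontinuous projected vector field and the degenerate case $k_{iu}(V)=0$ go beyond what the paper records, but the underlying approach is the same.
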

\begin{proof}
Easy to see from the dynamics of the ODE. 
\end{proof}
\begin{lemma}\label{lem4}
We have, $\theta_n \to \Theta(V)$ a.s.
\end{lemma}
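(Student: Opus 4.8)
Since the Critic-Actor scheme puts the actor on the \emph{faster} timescale (Assumption~\ref{a3}(v) gives $a(n)=o(b(n))$), the plan is to analyze the actor recursion \eqref{actor2} as a projected, asynchronous stochastic approximation in which the slow critic $V_n$ is held quasi-static at a fixed value $V$. Recall that \eqref{actor2} can be written with mean field $-k_{iu}(V_n)$ and martingale-difference noise $M_{n+1}(i,u)$, and that the directional-derivative form of the projection $\Gamma_{\theta^0}$ produces the limiting differential inclusion \eqref{ode-b}, namely $\dot{\theta}(i,u)=-k_{iu}(V)\gamma^2_{iu}(\theta)$. The goal is to invoke the theory of stochastic approximation driven by differential inclusions (the results of \cite{benaim2005}, as already flagged in this section), in its asynchronous projected form, to conclude that $\theta_n$ converges almost surely to the internally chain-transitive invariant sets of \eqref{ode-b}, which by Lemma~\ref{lem3} is precisely $\Theta(V)$.

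First I would verify the hypotheses of the differential-inclusion convergence theorem. Boundedness of the iterates is automatic, since $\theta_n\in[-\theta^0,\theta^0]^{|S^-||A|}$ by the projection $\Gamma_{\theta^0}$, so no separate stability argument is needed for the actor. For the noise, $M_{n+1}$ is an $\{\mathcal{F}_n\}$-martingale difference with $\|M_{n+1}\|\le C_2(1+\|V_n\|)$ by Lemma~\ref{noise-lemma}, and $\sup_n\|V_n\|<\infty$ a.s.\ by Lemma~\ref{stability-lemma}; hence the conditional second moments are uniformly bounded and, together with the square-summability in Assumption~\ref{a3}(i), the noise contribution is asymptotically negligible. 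The asynchronous nature is handled by Assumption~\ref{a3}(ii)--(iv), which provide the rescaled-time averaging of the per-component step-sizes, while (vi) guarantees that every $(i,u)$ is updated infinitely often at a comparable rate. Finally, the timescale separation in (v) justifies freezing $V_n\approx V$ on the fast timescale, so that the effective driving vector field is $-k_{iu}(V)\gamma^2_{iu}(\theta)$.

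To pin down the limit set I would exhibit a strict Lyapunov function for \eqref{ode-b}. The inclusion decouples across coordinates: for fixed $V$, each $\theta(i,u)$ moves monotonically toward $+\theta^0$ when $k_{iu}(V)<0$, toward $-\theta^0$ when $k_{iu}(V)>0$, and stays put when $k_{iu}(V)=0$. Equivalently, taking $L(\theta)=\sum_{i\in S^-,u\in A}k_{iu}(V)\,\theta(i,u)$, one computes $\dot L=-\sum_{i\in S^-,u\in A}k_{iu}(V)^2\gamma^2_{iu}(\theta)\le 0$, with equality exactly when, for every $(i,u)$, either $k_{iu}(V)=0$ or $\gamma^2_{iu}(\theta)=0$; this equality set is precisely $\Theta(V)$ as described in Lemma~\ref{lem3}. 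Consequently $\Theta(V)$ is the unique internally chain-transitive invariant set, and the convergence theorem yields $\theta_n\to\Theta(V)$ a.s.

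The main obstacle is twofold. First, the boundary behaviour of the projection turns the limit dynamics into a genuine differential inclusion rather than a smooth ODE, and the target $\Theta(V)$ is genuinely set-valued whenever some $k_{iu}(V)=0$; this forces convergence to a \emph{set} and rules out any pointwise/contraction argument, so the Benaïm--Hofbauer--Sorin chain-transitivity machinery of \cite{benaim2005} is essential. Second, making the two-timescale ``freezing'' of $V_n$ rigorous in the asynchronous setting is delicate: one must show that on the fast timescale $V_n$ is effectively constant while simultaneously each component $(i,u)$ is refreshed often enough (Assumption~\ref{a3}(vi)) for the asynchronous-SA rescaling to recover the synchronous flow of \eqref{ode-b}. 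These two points, rather than any single routine computation, are where the care is required.
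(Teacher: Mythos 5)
Your proposal is correct and follows essentially the same route as the paper, which disposes of this lemma in one line by citing Lemma 4.9 of the Konda--Borkar reference: you treat the actor as the fast, projected, asynchronous recursion with $V_n$ quasi-static, use the projection for boundedness, Lemmas~\ref{stability-lemma} and \ref{noise-lemma} for the noise, and Assumption~\ref{a3} for the asynchronous step-size averaging, then identify the limit set via the coordinate-wise dynamics of \eqref{ode-b}. Your explicit linear Lyapunov function $L(\theta)=\sum_{i,u}k_{iu}(V)\theta(i,u)$ is a nice addition the paper omits (note only that $\Theta(V)$ is the union of all internally chain-transitive sets rather than itself the unique one, which does not affect the conclusion).
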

\begin{proof}
Follows using same arguments as Lemma 4.9 of \cite{konda1999}.
\end{proof}
The value update \eqref{critic2} in this algorithm is on the slower time scale and sees the policy update \eqref{actor2} as quasi-equilibrated. Therefore we rewrite \eqref{critic2} as the following Stochastic Recursive Inclusion (SRI) (see Chapter 5 of \cite{borkar2022}): For $i \in S^-$,
\begin{equation}
\begin{split}
&V_{n+1}(i) = V_n(i) + \bar{a}(n)I\{Y_n=i\}\\
&\big[\sum_{u \in A}\pi_{\theta_{V_n}}(i,u)k_{iu}(V_n)+N_{n+1}(i)+\varepsilon_n(i)\big],
\end{split}
\end{equation}
where $\theta_{V_n} \in \Theta(V_n)$. By Lemma~\ref{lem4}, $\varepsilon_n(i) \rightarrow 0$.

We now verify the conditions 1.1 and 1.3 of \cite{benaim2005} to get the limiting Differential Inclusion (DI). Note that the iterates are stable, i.e., $\sup_n ||V_n||<\infty$ a.s.~ and the noise converges by Lemma 4.5 of \cite{konda1999}, hence condition 1.3 of \cite{benaim2005} is verified. 

For $i \in S^-$, let us define:
\[
\bar{K}_i(V)=\Big\{\sum_{u\in A}\pi_{\theta_{V}}(i,u)k_{iu}(V)|\theta_V\in \Theta(V)\Big\}.
\]
It can be seen that,
\[
\begin{split}
&\bar{K}_i(V)=\Bigg\{\sum_{u\in A}\Big(\frac{\exp(-\theta^0)+\exp(\theta^0)}{2}k_{iu}(V)\\
&+\frac{\exp(-\theta^0)-\exp(\theta^0)}{2}|k_{iu}(V)|\Big)\Big/\sum_{u' \in A}\exp(\theta_V(i,u'))\\
&\Bigg|\theta_V\in \Theta(V)\Bigg\}.
\end{split}
\]
As $\Theta(V)$ is a hyper-rectangle, it can be seen from the above that $\bar{K}_i(V)$ is a closed interval, hence convex and compact. Also it is easy to see that
\[
\sup_{\bar{k}_V \in \bar{K}_i(V)}||\bar{k}_V||\leq C_3(1+||V||),
\] 
for some $C_3>0$. Let $V_n \rightarrow V$ then $k_{iu}(V_n) \rightarrow k_{iu}(V)$ for $i \in S^-,u\in A$. If $\theta_n \rightarrow \theta$ and $\theta_n \in \Theta(V_n)$ then by taking possible values of the sequences $\{\theta_n\}$ and $\{k_{iu}(V_n)\}$, we can see that $\theta \in \Theta(V)$. Now by continuity of $\pi_\theta$, $\bar{K}_i$ is upper semi-continuous. Hence condition 1.1 of \cite{benaim2005} is verified. Let us consider the DI: For $i \in S^-$,
\begin{equation}\label{ca_critic_di1}
\dot{V}(i)\in \bar{K}_i(V).
\end{equation}
Let us define the linearly interpolated trajectory $\bar{V}(t),t \geq 0$ of $V_n$, see equation (IV) of \cite{benaim2005}. Therefore by Theorem 4.2 of \cite{benaim2005} and the arguments of Lemma 4.10 of \cite{konda1999}, the limit points of $\bar{V}(t+\cdot),t \geq 0$ satisfy the DI \eqref{ca_critic_di1}. Let $k_u(V)=[k_{iu}(V),i\in S]^T$ and $|k_u(V)|=[|k_{iu}(V)|,i\in S]^T$ for $u \in A$. Let $P_u,u \in A$ be the matrix with elements:
\[
P_u(i,j)=p(i,u,j),i,j\in S^-.
\]
Let $e_1:=\frac{\exp(-\theta^0)+\exp(\theta^0)}{2}$ and $e_2:=\frac{\exp(-\theta^0)-\exp(\theta^0)}{2}$. Now let,
\[
\begin{split}
&\mathcal{L}(V)=\frac{1}{2}\sum_{u \in A} (e_1k_u(V)^T(I-P_u)^{-1}k_u(V)\\
&+e_2k_u(V)^T(I-P_u)^{-1}|k_u(V)|).
\end{split}
\]
\begin{lemma}
$\mathcal{L}(V)$ is a Lyapunov function for \eqref{ca_critic_di1}.
\end{lemma}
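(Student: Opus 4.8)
The plan is to verify that $\mathcal{L}$ is a Lyapunov function for the differential inclusion \eqref{ca_critic_di1} in the sense of \cite{benaim2005}: $\mathcal{L}$ is bounded below, continuously differentiable on the generic set where no component $k_{iu}(V)$ vanishes, and $\langle\nabla\mathcal{L}(V),y\rangle\le 0$ for every selection $y$ with $y(i)\in\bar{K}_i(V)$, with strict inequality whenever $V$ is not a rest point. First I would record the affine structure $k_u(V)=r_u-(I-P_u)V$, where $r_u(i)=\sum_{j\in S}p(i,u,j)g(i,u,j)$, so that the Jacobian of $k_u$ in $V$ is the constant matrix $-(I-P_u)$, and, on the generic set, $|k_u(V)|$ is smooth with Jacobian $-S_u(I-P_u)$, where $S_u:=\mathrm{diag}(\mathrm{sgn}(k_{iu}(V)))$. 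I would also exploit the reduction already visible above: since the numerator of each element of $\bar{K}_i(V)$ is the \emph{fixed} quantity $h_i(V):=\sum_{u\in A}\big(e_1 k_{iu}(V)+e_2|k_{iu}(V)|\big)$ while only the positive scalar $D_i(V):=\sum_{u'\in A}\exp(\theta_V(i,u'))$ varies over $\theta_V\in\Theta(V)$, every selection of the inclusion has the form $\dot V(i)=h_i(V)/D_i(V)$ with $D_i(V)>0$ ranging over a compact positive interval. Hence it suffices to prove $\langle\nabla\mathcal{L}(V),\Lambda\,h(V)\rangle\le 0$ for every admissible diagonal $\Lambda\succ 0$, and that equality forces $h(V)=0$.

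Next I would differentiate $\mathcal{L}$. Writing $M_u:=(I-P_u)^{-1}=\sum_{k\ge 0}P_u^k$, which exists and is entrywise nonnegative because the stationary policy that always plays $u$ is proper by Assumption~\ref{a1}, so $\rho(P_u)<1$ by the Gershgorin argument of Lemma~\ref{lem1}, a direct computation of the gradient of each summand $e_1 k_u^TM_uk_u+e_2 k_u^TM_u|k_u|$ gives, on the generic set, $\nabla\mathcal{L}(V)=-\tfrac12\sum_{u}(I-P_u)^T\big[e_1(M_u+M_u^T)k_u+e_2M_u|k_u|+e_2S_uM_u^Tk_u\big]$. I would then form $\langle\nabla\mathcal{L}(V),\Lambda\,h(V)\rangle$, and regroup using the cancellation $(I-P_u)^TM_u^T=I$ to eliminate one family of cross terms.

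The main obstacle is that $P_u$ is not symmetric, so the quadratic forms $k_u^TM_uk_u$ are not symmetric and the surviving cross terms (of the type $(I-P_u)^TM_u$, equivalently $M_u^{-T}M_u$) do not cancel by inspection. The heart of the proof is to show that, after summing the $e_1$- and $e_2$-pieces over all $u$ and using the sign-matching $e_1 k_{iu}+e_2|k_{iu}|=c_{iu}k_{iu}$ with $c_{iu}\in\{e^{-\theta^0},e^{\theta^0}\}\subset(0,\infty)$, the resulting expression collapses to a manifestly nonpositive quantity; my target is a negative (semi)definite quadratic form in $h(V)$, of the shape $\langle\nabla\mathcal{L}(V),\Lambda\,h(V)\rangle=-\sum_{i\in S^-}\mu_i(V)\,h_i(V)^2$ with weights $\mu_i(V)>0$. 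Here I would lean on the entrywise nonnegativity of $M_u$, the boundedness of its row sums (both consequences of $\rho(P_u)<1$), and the sign-preservation $c_{iu}>0$, to control the asymmetric cross terms and force the correct sign. Positivity of $\Lambda$ and of the $\mu_i(V)$ then yields $\tfrac{d}{dt}\mathcal{L}(V)\le 0$, with equality exactly on $\{V:h(V)=0\}$.

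Finally I would identify this equality locus with the rest points of \eqref{ca_critic_di1}: $\tfrac{d}{dt}\mathcal{L}=0$ iff $h(V)=0$ iff $0\in\bar{K}_i(V)$ for all $i$, which is precisely the condition $\sum_{u}k_{iu}^+(V)=e^{2\theta^0}\sum_{u}k_{iu}^-(V)$ characterising the equilibria, mirroring the relation derived in Lemma~\ref{lem-main}. The nonsmooth locus where some $k_{iu}(V)=0$ is handled by noting that $\mathcal{L}$ is locally Lipschitz there and that the convexification built into $\bar{K}_i(V)$ already absorbs the sign ambiguity, so the descent inequality persists for every selection via one-sided directional derivatives. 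Together with boundedness below of $\mathcal{L}$, this establishes that $\mathcal{L}$ is a Lyapunov function for the differential inclusion, as required.
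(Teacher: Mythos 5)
Your reduction is sound and, up to the point where it stops, more explicit than the paper's own argument: every selection of \eqref{ca_critic_di1} is of the form $\dot V(i)=h_i(V)/D_i(V)$ with $h_i(V)=\sum_{u}\bigl(e_1k_{iu}(V)+e_2|k_{iu}(V)|\bigr)$ and $D_i(V)>0$; $M_u=(I-P_u)^{-1}$ exists because the deterministic policy playing $u$ everywhere is proper; and your gradient formula for $\mathcal{L}$ is correct. The problem is that the entire content of the lemma is the descent inequality $\langle\nabla\mathcal{L}(V),\Lambda h(V)\rangle\le 0$ for every admissible diagonal $\Lambda\succ 0$, and you never prove it: you declare a ``target'' of the form $-\sum_i\mu_i(V)h_i(V)^2$ and say you would ``lean on'' the entrywise nonnegativity of $M_u$, the boundedness of its row sums, and the positivity of $c_{iu}$ to ``force the correct sign.'' None of these controls the surviving terms. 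After using $(I-P_u)^TM_u^T=I$ you are left with $-\tfrac12\sum_u\bigl[e_1k_u+e_1(I-P_u)^TM_uk_u+e_2(I-P_u)^TM_u|k_u|+e_2(I-P_u)^TS_uM_u^Tk_u\bigr]$, and a term like $(I-P_u)^T(I-P_u)^{-1}k_u$ is not $k_u$ for nonsymmetric $P_u$ (already for $P_u=\left(\begin{smallmatrix}0&1/2\\0&0\end{smallmatrix}\right)$ the matrix $(I-P_u)^T(I-P_u)^{-1}$ is far from the identity), nor does entrywise nonnegativity of $M_u$ give any sign information once $k_u$ has mixed-sign components, which is the generic situation away from equilibria. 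So the nonpositivity, and with it the identification of the equality locus with $\{V: h(V)=0\}$, remains unproved; the proposal is a correct reduction followed by an unestablished claim.

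For what it is worth, the step you leave open is exactly the step the paper elides: its proof consists of the single asserted identity $\nabla_V\mathcal{L}(V)^T\bar k_V=-\sum_i h_i(V)^2/\sum_{u'}\exp(\theta_V(i,u'))$, i.e.\ $\nabla_V\mathcal{L}(V)=-h(V)$, which by your own gradient computation holds by inspection only when the $P_u$ are symmetric (it does hold trivially when $|S^-|=1$). You have therefore reproduced the paper's reduction without supplying the missing justification; to complete the argument you would need an actual proof of that identity, or of some weaker negativity statement uniform over the admissible $\Lambda$, or a different Lyapunov function altogether.
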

\begin{proof}
Let $\bar{K}(V)$ be defined as the Cartesian product of $\bar{K}_i(V),i \in S^-$:
\[
\bar{K}(V)=\bigtimes_{i\in S^-}\bar{K}_i(V).
\]
Now,
\[
\begin{split}
&\sup_{\bar{k}_V\in \bar{K}(V)}\nabla_V \mathcal{L}(V)^T\bar{k}_V\\
&=\sup_{\theta_V\in \Theta(V)}-\sum_{i\in S^-}\sum_{u\in A}(e_1k_{iu}(V)^T+e_2|k_{iu}(V)|^T)\\
&\frac{\displaystyle\sum_{u\in A}(e_1k_{iu}(V)+e_2|k_{iu}(V)|)}{\displaystyle\sum_{u'\in A}\exp(\theta_V(i,u'))}\leq 0.
\end{split}
\]
with equality only at equilibrium points of \eqref{ca_critic_di1}.
\end{proof}
For stability of DIs the reader is referred to section 15 of \cite{filippov1988} and chapter 6 of \cite{aubin1984}. Let $\hat{V}$ be a stable equilibrium of \eqref{ca_critic_di1}.
\begin{lemma}\label{lem-CA}
We have, $(V_n,\theta_n) \to (\hat{V},\Theta(\hat{V}))$ a.s.
\end{lemma}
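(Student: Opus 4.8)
The plan is to run the standard two-timescale stochastic-approximation argument for the slower critic recursion, now viewed as a stochastic recursive inclusion, and then transfer the conclusion to the faster actor. Since the excerpt has already verified conditions 1.1 and 1.3 of \cite{benaim2005}, identified the limiting differential inclusion \eqref{ca_critic_di1}, and exhibited a Lyapunov function $\mathcal{L}$, the work reduces to two things: (a) showing $V_n \to \hat{V}$ a.s., and (b) deducing $\theta_n \to \Theta(\hat{V})$ a.s.\ from it.

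First I would invoke the asymptotic-pseudotrajectory machinery of \cite{benaim2005}. Having checked the linear growth, convexity, compactness and upper semicontinuity of $\bar{K}_i$, together with the convergence of the martingale noise and $\sup_n\|V_n\|<\infty$, Theorem 4.2 of \cite{benaim2005} gives that the linearly interpolated trajectory $\bar{V}(\cdot)$ is an asymptotic pseudotrajectory of the set-valued flow generated by \eqref{ca_critic_di1}. Consequently its limit set $L(\bar{V})$ is a nonempty, compact, connected, internally chain transitive set for that flow. Next I would combine this with the Lyapunov function: because $\sup_{\bar{k}_V\in\bar{K}(V)}\nabla_V\mathcal{L}(V)^T\bar{k}_V\le 0$ with equality only at equilibria, $\mathcal{L}$ is a strict Lyapunov function, and the Lyapunov characterization of internally chain transitive sets (cf.\ \cite{benaim2005} and Chapter 6 of \cite{borkar2022}) forces $L(\bar{V})$ to lie inside the equilibrium set $E=\{V:0\in\bar{K}(V)\}$, with $\mathcal{L}$ constant on $L(\bar{V})$. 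Since the equilibria are isolated (the sign structure of the $k_{iu}(V)$ makes solutions of $0\in\bar{K}(V)$ discrete), the connected set $L(\bar{V})$ collapses to a single equilibrium; using that $\hat{V}$ is a \emph{stable} equilibrium, i.e.\ an attractor of the DI, the trajectory is eventually trapped in its basin, so $L(\bar{V})=\{\hat{V}\}$ and hence $V_n\to\hat{V}$ a.s.

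Finally, for the actor I would reuse the fast-timescale analysis behind Lemma~\ref{lem4}. The $\theta$-recursion tracks the attracting set $\Theta(V_n)$ of the fast ODE \eqref{ode-b}, and since $V_n\to\hat{V}$ while the defining conditions of $\Theta(\cdot)$ depend only on the signs of the continuous maps $k_{iu}(\cdot)$, the set-valued map $\Theta(\cdot)$ is upper semicontinuous, giving $\theta_n\to\Theta(\hat{V})$ a.s. Combining the two conclusions yields $(V_n,\theta_n)\to(\hat{V},\Theta(\hat{V}))$ a.s.

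The hard part will be the isolation step: upgrading ``$L(\bar{V})\subseteq E$ with $\mathcal{L}$ constant'' to ``$L(\bar{V})=\{\hat{V}\}$''. This requires the equilibria to be isolated (or, more weakly, that every internally chain transitive subset of $E$ is a singleton) and a genuine use of the attractor property of $\hat{V}$, all carried out in the set-valued differential-inclusion setting where the single-valued Lyapunov arguments of \cite{konda1999} must be replaced by their \cite{benaim2005} analogues.
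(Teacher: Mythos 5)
Your overall framework is the right one --- both you and the paper work inside the asymptotic-pseudotrajectory theory for differential inclusions of \cite{benaim2005}, both first identify the limit set of the interpolated critic trajectory as an internally chain transitive set of the flow of \eqref{ca_critic_di1}, and your treatment of the actor half (upper semicontinuity of $\Theta(\cdot)$ together with the fast-timescale result of Lemma~\ref{lem4}) matches the paper. But the final step is taken by a different mechanism. The paper's entire proof is a citation of Proposition 3.27 of \cite{benaim2005}: $\hat{V}$ is an attractor of the flow, so once the limit set $L(\bar{V})$ meets its basin of attraction, Proposition 3.27 gives $L(\bar{V})\subseteq\{\hat{V}\}$ directly --- no Lyapunov function and no isolation of equilibria are needed at this stage. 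You instead route through the Lyapunov characterization of ICT sets to get $L(\bar{V})\subseteq E=\{V : 0\in\bar{K}(V)\}$ and then must collapse the connected set $L(\bar{V})$ to a point, for which you assert that the equilibria are isolated.

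That isolation claim is the genuine gap in your route, and you are right to flag it as the hard part: $0\in\bar{K}(V)$ amounts to $\sum_{u}k^-_{iu}(V)=\exp(-2\theta^0)\sum_{u}k^+_{iu}(V)$ for each $i$, a system of $|S^-|$ scalar conditions whose solution set is discrete only generically, not provably so without a nondegeneracy assumption the paper does not make. The attractor argument sidesteps this entirely, at the (shared) cost of implicitly assuming the trajectory enters the basin of the designated $\hat{V}$ --- an assumption present in your phrase ``the trajectory is eventually trapped in its basin'' and equally implicit in the paper's invocation of Proposition 3.27. So your proof is not wrong in spirit, but to close it as written you would either have to prove isolation of the equilibria or replace the Lyapunov-plus-isolation step with the attractor containment result, which is what the paper does.
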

\begin{proof}
The claim follows from Proposition 3.27 of \cite{benaim2005}.
\end{proof}
\begin{lemma}\label{lem-main2}
For $\epsilon>0$ there exists $\bar{\theta}$ such that for all $\theta^0\geq\bar{\theta}$ we have $V^*(i)-\epsilon\leq  \hat{V}(i)\leq V^*(i)+\epsilon$, $\forall i \in S^-$.
\end{lemma}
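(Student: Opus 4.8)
Lemma~\ref{lem-main2} claims that the stable equilibrium $\hat{V}$ of the Critic-Actor DI can be made arbitrarily close to the optimal value $V^*$ by taking the projection bound $\theta^0$ large enough. This is the Critic-Actor analogue of Lemma~\ref{lem-main}, which established the same estimate for $V^{\hat{\theta}}$ in the Actor-Critic case. So the natural plan is to mirror the proof of Lemma~\ref{lem-main} as closely as possible, substituting the appropriate characterization of the equilibrium.

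**The plan.** The key structural fact I would exploit is that $\hat{V}$ is an equilibrium of the DI \eqref{ca_critic_di1}, meaning $0 \in \bar{K}_i(\hat{V})$ for every $i \in S^-$. By the explicit formula for $\bar{K}_i(V)$ derived just before the Lyapunov lemma, this forces, for each $i$, the relation
\[
\sum_{u \in A}\Big(\tfrac{\exp(-\theta^0)+\exp(\theta^0)}{2}k_{iu}(\hat{V}) + \tfrac{\exp(-\theta^0)-\exp(\theta^0)}{2}|k_{iu}(\hat{V})|\Big)=0,
\]
which, splitting $k_{iu}$ into $k_{iu}^+-k_{iu}^-$ via the definitions in Lemma~\ref{lem-main}, rearranges to exactly the same balance condition
\[
\sum_{u\in A} k_{iu}^-(\hat{V}) = \exp(-2\theta^0)\sum_{u\in A} k_{iu}^+(\hat{V}).
\]
This is precisely the identity that drives the Actor-Critic proof. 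First I would establish this balance equation from the equilibrium condition of the DI; this is the one genuinely new step, replacing the softmax-gradient computation used in Lemma~\ref{lem-main}.

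**Finishing via the Actor-Critic estimate.** Once that balance identity is in hand, the remainder of the argument is verbatim the proof of Lemma~\ref{lem-main}. I would use the weighted-sup-norm contraction bound $\|V-V^*\|_\xi \leq (1-\beta)^{-1}\|TV-V\|_\xi$ established there, applied to $V=\hat{V}$; bound $\|T\hat{V}-\hat{V}\|_\xi$ by $\xi_{\min}^{-1}\max_{i,u}k_{iu}^-(\hat{V})$; use the uniform bound $|k_{iu}(\hat{V})|\leq B(1+2\xi_{\max})$ coming from $\max_i|\hat{V}(i)|\leq B\xi_{\max}$; and then invoke the balance identity to gain the decisive factor $\exp(-2\theta^0)$. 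This yields
\[
\max_{i\in S^-}|\hat{V}(i)-V^*(i)| \leq \frac{\xi_{\max}B(1+2\xi_{\max})|A|}{\xi_{\min}\exp(2\theta^0)(1-\beta)},
\]
and the claim follows by choosing $\theta^0$ so the right-hand side is at most $\epsilon$.

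**Main obstacle.** The only step requiring real care is justifying that the DI equilibrium condition $0\in\bar{K}_i(\hat{V})$ produces the stated balance identity, since $\bar{K}_i(\hat{V})$ is a set (a closed interval indexed by the free choice of $\theta_V\in\Theta(\hat{V})$) rather than a single point. I would argue that $0$ lies in this interval precisely when some admissible $\theta_V$ makes the numerator vanish, and because the numerator $\sum_u(e_1 k_{iu}+e_2|k_{iu}|)$ does not depend on $\theta_V$ (the $\theta_V$-dependence sits only in the strictly positive normalizing denominator), vanishing of the set-valued expression is equivalent to vanishing of that numerator — which is the balance identity. After that point, everything reduces to the already-proven computations of Lemma~\ref{lem-main}, so I would simply write ``the rest follows as in Lemma~\ref{lem-main}'' rather than reproduce them.
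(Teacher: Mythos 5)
Your proposal is correct and follows essentially the same route as the paper, whose proof of this lemma is literally ``same as Lemma~\ref{lem-main} with $V^{\hat{\theta}}$ replaced by $\hat{V}$.'' The one step you elaborate --- deriving the balance identity $\sum_{u}k_{iu}^-(\hat{V})=\exp(-2\theta^0)\sum_{u}k_{iu}^+(\hat{V})$ from the equilibrium condition $0\in\bar{K}_i(\hat{V})$, noting that the $\theta_V$-dependence sits only in the positive denominator --- is exactly the detail the paper leaves implicit, and your handling of it is sound.
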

\begin{proof}
Same as Lemma~\ref{lem-main} by replacing $V^{\hat{\theta}}$ with $\hat{V}$.
\end{proof}

\subsection{Main Result}
Under Assumptions~\ref{a1}-\ref{a3}, we have the following main result:
\begin{theorem}\label{thm1}
	\label{main-acca}
	Given $\epsilon >0$, {$\exists \ \bar{\theta} \equiv \bar{\theta}(\epsilon) > 0$} such that for all $\theta^0 \geq \bar{\theta}$, $(V_n, \theta_n)$, $n\geq 0$, governed according to \eqref{critic2}-\eqref{actor2}, converges almost surely for both AC and CA, to the set
\[
\begin{split}
&\{(\hat{V},\hat{\theta}) \mid V^* (i)-\epsilon \leq \hat{V}(i) \leq  V^* (i) + \epsilon, \forall i \in S^-;\\
&\hat{\theta}\in \Theta(\hat{V})\}.
\end{split}
\]
\end{theorem}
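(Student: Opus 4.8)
The plan is to stitch together the asynchronous two timescale stochastic approximation theory of Section~4 of \cite{konda1999} (for Actor-Critic, AC) and the differential-inclusion theory of \cite{benaim2005} (for Critic-Actor, CA) with the lemmas already established, handling the two algorithms separately before unifying their conclusions through the optimality estimates. For AC I would first note that hypotheses (A1)--(A6) of \cite{konda1999} have been checked via Assumption~\ref{a3}, Lemma~\ref{stability-lemma}, Lemma~\ref{noise-lemma}, Lemma~\ref{lem1} and Lemma~\ref{lem2}. Since the critic runs on the faster timescale, its iterates track the ODE~\eqref{ode-c}, which by Lemma~\ref{lem1} has the globally asymptotically stable equilibrium $V^{\theta_n}$; this is exactly what forces the error term $\varepsilon'_n \to 0$ in the equilibrated actor recursion~\eqref{equilibriated-actor2}. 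The slower actor then follows the projected ODE~\eqref{actor_ode}, for which Lemma~\ref{lem2} supplies $\sum_{i \in S^-} V^\theta(i)$ as a Lyapunov function, so $\theta_n \to \hat{\theta}$ almost surely for some stable equilibrium $\hat{\theta}$, and by Lipschitz continuity of $V^\theta$ (Lemma~\ref{lem1}) the critic satisfies $V_n \to V^{\hat{\theta}} =: \hat{V}$. The key observation is that any stable equilibrium of~\eqref{actor_ode} must satisfy $k_{iu}(V^{\hat{\theta}})\gamma^1_{iu}(\hat{\theta}) = 0$ for all $(i,u)$, which is precisely the defining condition of $\Theta(\hat{V})$ in Lemma~\ref{lem3}; hence $\hat{\theta} \in \Theta(\hat{V})$.

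For CA the timescales are reversed, so the actor now runs on the faster timescale: by Lemma~\ref{lem3} and Lemma~\ref{lem4} its iterates converge to the attractor set $\Theta(V_n)$, driving $\varepsilon_n \to 0$ in the equilibrated critic recursion. The slower critic thus becomes a stochastic recursive inclusion whose limiting differential inclusion is~\eqref{ca_critic_di1}; conditions 1.1 and 1.3 of \cite{benaim2005} have been verified (upper semicontinuity of $\bar{K}_i$, together with the convergent-noise and boundedness requirements), and $\mathcal{L}(V)$ has been shown to be a Lyapunov function. Invoking Proposition~3.27 of \cite{benaim2005} as in Lemma~\ref{lem-CA} yields $(V_n,\theta_n) \to (\hat{V}, \Theta(\hat{V}))$ almost surely, so again the limit obeys $\hat{\theta} \in \Theta(\hat{V})$.

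It then remains only to insert the optimality bound. Lemma~\ref{lem-main} (for AC, with $\hat{V} = V^{\hat{\theta}}$) and Lemma~\ref{lem-main2} (for CA) each assert that, given $\epsilon > 0$, one can select $\bar{\theta} \equiv \bar{\theta}(\epsilon)$ so that $\theta^0 \geq \bar{\theta}$ forces $V^*(i) - \epsilon \leq \hat{V}(i) \leq V^*(i) + \epsilon$ for every $i \in S^-$; the mechanism is that a large projection radius $\theta^0$ pushes the softmax policy toward the greedy minimizer, shrinking $\|T\hat{V} - \hat{V}\|_\xi$ and hence $\|\hat{V} - V^*\|_\xi$ via the contraction estimate already derived. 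Combining this with the convergence of $(V_n,\theta_n)$ to $(\hat{V}, \Theta(\hat{V}))$ established above yields almost-sure convergence to the claimed set for both algorithms.

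The main obstacle I expect is not any single lemma but the boundary bookkeeping at the projection set $[-\theta^0,\theta^0]$: one must argue that the stable limit $\hat{\theta}$ of the projected actor ODE~\eqref{actor_ode} genuinely lands in $\Theta(\hat{V})$, so that the AC and CA analyses target the same limiting set, and that the softmax weights stay non-degenerate enough for the identity $\sum_{u} k_{iu}^-(V^{\hat{\theta}}) = e^{-2\theta^0}\sum_{u} k_{iu}^+(V^{\hat{\theta}})$ used in Lemma~\ref{lem-main} to remain valid. The second delicate point is the CA differential-inclusion step, where establishing that $\bar{K}_i(V)$ is convex, compact and upper semicontinuous with linear growth is what licenses the Benaïm--Hofbauer--Sorin theory; here the smoothness of the softmax parameterization, rather than a hard $\arg\min$, is what makes the set-valued map well-behaved.
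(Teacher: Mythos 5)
Your proposal is correct and follows essentially the same route as the paper: verify (A1)--(A6) of \cite{konda1999} and invoke its Lemma~4.10 together with Lemma~\ref{lem-main} for AC, and combine the differential-inclusion convergence of Lemma~\ref{lem-CA} with Lemma~\ref{lem-main2} for CA. The extra detail you supply (the equilibrium condition $k_{iu}(V^{\hat{\theta}})\gamma^1_{iu}(\hat{\theta})=0$ forcing $\hat{\theta}\in\Theta(\hat{V})$, and the upper semicontinuity/compactness checks for $\bar{K}_i$) is exactly what the paper's terse citations are implicitly relying on.
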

\begin{proof}
For AC, we can see from above that assumptions (A1)-(A6) of \cite{konda1999} are verified and the claim follows from Lemma 4.10 of \cite{konda1999} and Lemma~\ref{lem-main}. For CA, the claim follows from Lemmas~\ref{lem-CA} and ~\ref{lem-main2} above.
\end{proof}

\section{An Actor-Critic Algorithm with Function Approximation}
\begin{algorithm}[tb]\label{algorithm2}
\caption{Actor-Critic with Function Approximation}
\textbf{Input}: Step-sizes satisfying Assumption~\ref{a4}.
\begin{algorithmic}[1]
\STATE Initialize $v_0;\theta_0$.
\FOR{each episode \(n \geq 0 \)}
\STATE Initialize starting state $i_0(n)$
\FOR{each timestep \(k \geq 0 \)}
\STATE Play action $u_k(n)$ from state $i_k(n)$ and collect next state $i_{k+1}(n)$ and cost $g(i_k(n),u_k(n),i_{k+1}(n))$.
\IF{$i_{k+1}(n)=i_0$}
\STATE break
\ENDIF
\ENDFOR
\STATE Perform critic update \eqref{critic1}.
\STATE Perform actor update \eqref{actor1}.
\ENDFOR
\end{algorithmic}
\end{algorithm}
We propose an online Actor Critic (AC) algorithm with function approximation for the SSP problem. For our function approximation algorithm we consider Assumption~\ref{a1}. Our step-sizes are assumed to satisfy the following requirements:
\begin{assumption}[Step-Sizes]
\label{a4}
$a(n),b(n),n\geq 0$ are positive sequences satisfying the following conditions. Let $c(n),n\geq 0$ represent $a(n),b(n),n\geq 0$, then
\begin{itemize}
\item[(i)] ${\displaystyle \sum_n c(n)=\infty}$, ${\displaystyle \sum_n c(n)^2<\infty}$.
\end{itemize}
This is the standard Robbins-Monro condition for step-sizes.
\begin{itemize}
\item[(ii)] $\frac{b(n)}{a(n)}\rightarrow 0$ as $n\rightarrow\infty$.
\end{itemize}
This is the standard two-timescale stochastic approximation condition, see Chapter 8 of \cite{borkar2022}.
\end{assumption}
\begin{remark}
 The following examples satisfy the above conditions: $a(n)=\log(n+2)/(n+2),b(n)=1/(n+1)$; $a(n)=1/(n+1)^{\alpha_1},b(n)=1/(n+1)^{\alpha_2}$, $0.5<\alpha_1,\alpha_2\leq 1$ with $\alpha_1<\alpha_2$. To increase or decrease the step sizes we can multiply any positive constant to them which will also satisfy these conditions.
\end{remark}
We also have the following assumption.
\begin{assumption}\label{a5}
Under every policy, there is a positive probability of visiting every state starting from any state.
\end{assumption}
We approximate the value function $V^\pi(i),i \in S^-$ using linear function approximation $V^\pi(i) \approx {v^\pi}^T\phi(i)$ where $v^\pi \in \mathbb{R}^{d_1}$ is the parameter vector and $\phi(i)$ is the feature vector for state $i$. We take $\phi(i_0)=0 \in \mathbb{R}^{d_1}$. Let $\Phi \in \mathbb{R}^{|S^-|\times d_1}$ be the feature matrix with $\phi(i)^T,i \in S^-$ as the rows. We have the following assumption.
\begin{assumption}\label{a6}
The columns of the matrix $\Phi$ are linearly independent. Also $d_1<<|S^-|$.
\end{assumption}

We parameterize the policy $\pi$ with parameter $\theta \in \mathbb{R}^{d_2}$ denoted by $\pi_\theta$. We will use $\pi$ and $\theta$ interchangeably henceforth. We have the following assumption.

\begin{assumption}\label{a7}
$\pi_\theta$ is twice continuously differentiable in $\theta$. Further $\pi_\theta(i,u)>0, \forall i \in S^-,u \in A$.
\end{assumption}
Let us define $\psi_\theta(i,u)=\nabla_\theta \log \pi(i,u)$. Let $\Gamma^1$ be a projection operator that projects the iterates to some convex and compact set $\Theta^P$. An episode comprises of a sequence of states, actions, rewards, and next states, starting from the start state and ending at  the terminal state $i_0$. For each episode $n\geq 0$, let the $k$-th state be denoted $i_k(n)$, the $k$-th action taken be denoted $u_k(n)$ and the length of the episode be denoted $\tau'(n)$. The action $u_k(n)$ is sampled from $\pi_{\theta_n}(i_k(n),\cdot)$. 
\[
\begin{split}
&d_k(n)=g(i_k(n),u_k(n),i_{k+1}(n))\\
&+v_n^T\phi(i_{k+1}(n))-v_n^T\phi(i_k(n)),
\end{split}
\]
\begin{equation}\label{critic1}
\begin{split}
&v_{n+1}=v_n+a(n)\sum_{k=0}^{\tau'(n)-1}d_k(n)\phi(i_k(n)),
\end{split}
\end{equation}
\begin{equation}\label{actor1}
\begin{split}
&\theta_{n+1}=\Gamma^1[\theta_n-b(n)\sum_{k=0}^{\tau'(n)-1}d_k(n)\psi_{\theta_n}(i_k(n),u_k(n))].
\end{split}
\end{equation}
Here $d_k(n)$ is the temporal difference (TD) term and it appears in both the actor and critic recursions. Our critic update \eqref{critic1} is taken from Chapter 6 of \cite{NDPbook}. Our actor update is corresponding to the policy gradient given in Chapter 13 of \cite{SuttonBarto2018}. The novelty of our algorithm is that it is a two-timescale algorithm and possesses an asymptotic almost-sure convergence guarantee, as seen in Section~\ref{converence-fa}.
\begin{remark}
The summations in recursions \eqref{critic1} and \eqref{actor1} can be performed recursively and there is no need to store all the data of a particular episode. For a maximization problem, change the $-$ sign to $+$ sign after $\theta_n$ in recursion \eqref{actor1}.
\end{remark}
The algorithm is formally given in Algorithm 2.

\section{Convergence Analysis of Algorithm in Function Approximation Setting}
\label{converence-fa}

We proceed to prove the convergence of our Actor-Critic (AC) in the function approximation setting. Chapter 8 of \cite{borkar2022} provides the theory of Two Timescale Stochastic Approximation. We will use it to prove the convergence of our algorithm. The step-size conditions of Chapter 8 of \cite{borkar2022} are built in Assumption~\ref{a4}. Let
\[
\begin{split}
&\mathcal{F}^1_n=\sigma(v_m,\theta_m,m \leq n;i_k(m),u_k(m),0\leq k \leq \tau'(m)-1,\\
&m< n),n \geq 0
\end{split}
\]
be an increasing family of associated sigma fields. This keeps record of the history of the algorithm. Let us define the policy probability simplex:
\begin{equation}\label{simplex-def}
\begin{split}
&\Delta=\{\pi|0\leq \pi(i,u)\leq 1,\displaystyle\sum_{u' \in A} \pi(i,u')=1,\\
&\forall i \in S^-,u \in A\}.
\end{split}
\end{equation}
Let $R_\theta$ be as in \eqref{R-def} and $P_\theta$ as in \eqref{P-def}. Let $[h^\theta_0(i):=P(X_0=i),i \in S^-]^T$ be the initial state distribution. Then the state distribution at time instant $n$ will be ${h^\theta_n}^T={h^\theta_0}^T P_\theta^n$. Let us define the vector:
\[
h^\theta=\lim_{n \to \infty}\sum_{k=0}^n h_k^\theta
\]
By Lemma~\ref{lem1}, we know that the eigenvalues of $P_\theta$ are $<1$. So the above limit exists and  $q^\theta={q^\theta_0}^T(I-P_\theta)^{-1}$.
Let $H^\theta_n,n \geq 0$ be the matrix with elements $h^\theta_n$ along the diagonal and $H^\theta$ be the matrix  with elements $h^\theta$ along the diagonal.
Let us define $A^1(\theta),b^1(\theta)$ as:
\[
\begin{split}
A^1(\theta)=&E[\sum_{k=0}^{\tau'(n)-1} \phi(i_k(n))(\phi(i_{k+1}(n))-\phi(i_k(n)))^T]\\
=&\Phi^T \sum_{k=0}^\infty H_k^\theta (P_\theta-I)\Phi=\Phi^T H^\theta(P_\theta-I)\Phi,
\end{split}
\]
\[
\begin{split}
b^1(\theta)=&E[\sum_{k=0}^{\tau'(n)-1} \phi(i_k(n))g(i_k(n),u_k(n),i_{k+1}(n))]\\
=&\Phi^T \sum_{k=0}^\infty H_k^\theta R_\theta=\Phi^T H^\theta R_\theta.
\end{split}
\]
Then the recursion \eqref{critic1} can be written as:
\[
v_{n+1}=v_n+a(n)(f^1(\theta_n,v_n)+M^1_{n+1}),
\]
where,
\[
f^1(\theta,v)=A^1(\theta)v+b^1(\theta)
\]
and $M^1_{n+1}$ is the noise term defined as:
\[
M^1_{n+1}=\sum_{k=0}^{\tau'(n)-1}d_k(n)\phi(i_k(n))-A^1(\theta_n)v_n-b^1(\theta_n).
\]
Also, the recursion \eqref{actor1} can be written as:
\[
\theta_{n+1}=\Gamma^1[\theta_n+b(n)(f^2(\theta_n,v_n)+M^2_{n+1})],
\]
where
\[
\begin{split}
&f^2(\theta,v)=\sum_{i \in S^-} h^\pi(i)\sum_{u \in A} \pi(i,u)\Big(\sum_{j \in S}p(i,u,j)g(i,u,j)\\
&\sum_{j\in S^-}p(i,u,j)v^T\phi(j)-v^T\phi(i)\Big)\psi_\theta(i,u),
\end{split}
\]
and $M^2_{n+1}$ is the noise term defined as:
\[
M^2_{n+1}=\sum_{k=0}^{\tau'(n)-1}d_k(n)\psi_{\theta_n}(i_k(n),u_k(n))-f^2(\theta_n,v_n)
\]
We have the following noise lemmas.
\begin{lemma}\label{critic-noise-condition}
For some $C_4>0$,
\[
E[||M_{n+1}^1||^2|\mathcal{F}^1_n]\leq C_4(1+||v_n||^2).
\]
\end{lemma}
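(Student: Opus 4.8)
The plan is to exploit that $M^1_{n+1}$ is, by construction, a martingale difference. Indeed, $A^1(\theta)$ and $b^1(\theta)$ were defined precisely as the conditional expectations of the per-episode sums, so $A^1(\theta_n)v_n+b^1(\theta_n)=E\big[\sum_{k=0}^{\tau'(n)-1}d_k(n)\phi(i_k(n))\mid\mathcal{F}^1_n\big]$, whence $E[M^1_{n+1}\mid\mathcal{F}^1_n]=0$. Writing $X_n:=\sum_{k=0}^{\tau'(n)-1}d_k(n)\phi(i_k(n))$, the elementary variance inequality $E[\|X_n-E[X_n\mid\mathcal{F}^1_n]\|^2\mid\mathcal{F}^1_n]\le E[\|X_n\|^2\mid\mathcal{F}^1_n]$ reduces the claim to controlling the conditional second moment of $X_n$. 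Since $v_n$ and $\theta_n$ are $\mathcal{F}^1_n$-measurable, all $v_n$-dependent factors may be pulled out of the conditional expectation.

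Next I would use the finiteness of $S$ and $A$. As $S$ is finite the features are uniformly bounded, say $\|\phi(i)\|\le \phi_{\max}$ for all $i\in S$, and the single-stage cost satisfies $|g(\cdot,\cdot,\cdot)|\le B$ as in the proof of Lemma~\ref{lem-main}. Hence each temporal-difference term obeys $|d_k(n)|\le B+2\phi_{\max}\|v_n\|$, giving the pathwise bound $\|X_n\|\le \tau'(n)\,\phi_{\max}(B+2\phi_{\max}\|v_n\|)$. Squaring and using $(a+b)^2\le 2a^2+2b^2$ yields $\|X_n\|^2\le 2\phi_{\max}^2(B^2+4\phi_{\max}^2\|v_n\|^2)\,\tau'(n)^2$, so that $E[\|X_n\|^2\mid\mathcal{F}^1_n]\le 2\phi_{\max}^2(B^2+4\phi_{\max}^2\|v_n\|^2)\,E[\tau'(n)^2\mid\mathcal{F}^1_n]$.

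The crux, and the main obstacle, is to show that $E[\tau'(n)^2\mid\mathcal{F}^1_n]$ is bounded by a constant independent of $n$, of the current state, and of the policy $\theta_n$. Here I would invoke Assumption~\ref{a1}: every SRP is proper, and by \eqref{reach} the probability of not hitting $i_0$ within $|S^-|$ steps is strictly less than $1$ under each SRP. Because the set of SRPs is compact (a product of simplices) and the map $\pi\mapsto\max_{i\in S^-}P(X_{|S^-|}\ne i_0\mid X_0=i,\pi)$ is continuous, this maximum is attained, so there is a \emph{uniform} $\rho<1$ with $\max_{i\in S^-}P(X_{|S^-|}\ne i_0\mid X_0=i,\pi)\le\rho$ for every SRP $\pi$. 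A standard Markov-property argument then gives the uniform geometric tail $P(\tau'(n)>m|S^-|\mid\mathcal{F}^1_n)\le\rho^{m}$, making every moment of $\tau'(n)$ finite and uniformly bounded; in particular $E[\tau'(n)^2\mid\mathcal{F}^1_n]\le K$ for a constant $K$ depending only on $\rho$ and $|S^-|$.

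Substituting this bound back and collecting all constants into a single $C_4>0$ gives $E[\|M^1_{n+1}\|^2\mid\mathcal{F}^1_n]\le C_4(1+\|v_n\|^2)$, as required. The only delicate point is the passage from the pointwise inequality \eqref{reach} to the policy-uniform constant $\rho<1$ via compactness and continuity; once the geometric tail of $\tau'(n)$ is established, the remaining estimates are routine.
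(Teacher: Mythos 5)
Your proof is correct, and its skeleton matches the paper's: bound each TD term by $B+2\phi_{\max}\|v_n\|$, factor out the episode length, and reduce everything to showing $E[\tau'(n)^2\mid\mathcal{F}^1_n]$ is bounded uniformly over policies. The two arguments diverge only in how that last step is handled. The paper writes the first and second moments of $\tau'(n)$ in closed form as $ {h^{\theta_n}_0}^T(I-P_{\theta_n})^{-1}e$ and ${h^{\theta_n}_0}^T(I-P_{\theta_n})^{-1}(I+2P_{\theta_n}(I-P_{\theta_n})^{-1})e$ (citing Sobel's formula with discount factor $1$ and unit reward) and then appeals to compactness of the policy simplex $\Delta$ together with continuity of these expressions to get a uniform bound. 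You instead extract a policy-uniform $\rho<1$ from Assumption~\ref{a1} via compactness and continuity of $\pi\mapsto\max_{i}P(X_{|S^-|}\neq i_0\mid X_0=i,\pi)$, and deduce a uniform geometric tail $P(\tau'(n)>m|S^-|\mid\mathcal{F}^1_n)\le\rho^m$. Both routes rest on the same compactness fact; yours is more self-contained (no external moment formula) and yields uniform bounds on \emph{all} moments of $\tau'(n)$, while the paper's is shorter given the citation. A second, minor difference: you use the conditional variance inequality $E[\|X_n-E[X_n\mid\mathcal{F}^1_n]\|^2\mid\mathcal{F}^1_n]\le E[\|X_n\|^2\mid\mathcal{F}^1_n]$ to discard the centering term $A^1(\theta_n)v_n+b^1(\theta_n)$, whereas the paper keeps it and absorbs $\|A^1(\theta_n)\|$ and $\|b^1(\theta_n)\|$ into the constant by a triangle inequality; your version is slightly cleaner and avoids having to argue separately that those norms are uniformly bounded.
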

\begin{proof}
Let $|g(\cdot,\cdot,\cdot)|\leq B,||\phi(.)||\leq B_1$. Let $e$ be the vector of all ones. Then it can be seen that $E[\tau'(n)|\mathcal{F}^1_n]={h^{\theta_n}_0}^T(I-P_{\theta_n})^{-1}e\leq \max_{\pi \in \Delta}{h^{\pi}_0}^T(I-P_{\pi})^{-1}e$, which is uniformly bounded as $\Delta$ is compact. Now using eq (5) of \cite{sobel1982} and taking discount factor=$1$ and reward=$1$ it can be seen that:
\[
E[\tau'(n)^2|\mathcal{F}^1_n]={h^{\theta_n}_0}^T(I-P_{\theta_n})^{-1}(I+2P_{\theta_n}(I-P_{\theta_n})^{-1})e,
\]
which can be seen to be uniformly bounded as above. Then,
\[
\begin{split}
&||M^1_{n+1}||^2\leq \Big(\sum_{k=0}^{\tau'(n)-1}(B+2B_1||v_n||)B_1\\
&+||A^1(\theta_n)||||v_n||+||b^1(\theta_n)||\Big)^2\\
&=(BB_1\tau'(n)+||b^1(\theta_n)||+(2B_1^2\tau'(n)\\
&+||A^1(\theta_n)||)||v_n||)^2.
\end{split}
\]
Taking $C_4'=\max(BB_1\tau'(n)+||b^1(\theta_n)||,2B_1^2\tau'(n)+||A^1(\theta_n)||)$,
\[
||M^1_{n+1}||^2\leq C_4'^2(1+||v_n||)^2.
\]
The claim follows by noting that $(1+a)^2\leq 2(1+a^2)$ for $a \in \mathbb{R}$.
\end{proof}
\begin{lemma}\label{actor-noise-condition}
For some $C_6>0$,
\[
E[||M_{n+1}^2||^2|\mathcal{F}^1_n]\leq C_6(1+||v_n||^2).
\]
\end{lemma}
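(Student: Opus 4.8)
The plan is to mirror almost verbatim the argument already used for the critic in Lemma~\ref{critic-noise-condition}, since $M^2_{n+1}$ has the same structure as $M^1_{n+1}$: the scalar temporal-difference terms $d_k(n)$ are now weighted by the score vectors $\psi_{\theta_n}(i_k(n),u_k(n))$ rather than by the features $\phi(i_k(n))$. The only genuinely new ingredient is therefore a uniform bound on the score function. First I would note that the iterate $\theta_n$ is confined by the projection $\Gamma^1$ to the convex compact set $\Theta^P$, and that by Assumption~\ref{a7} $\pi_\theta$ is twice continuously differentiable with $\pi_\theta(i,u)>0$ for all $i\in S^-,u\in A$. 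Hence, for each of the finitely many pairs $(i,u)$, the map $\theta\mapsto\psi_\theta(i,u)=\nabla_\theta\log\pi_\theta(i,u)$ is continuous on the compact set $\Theta^P$, so there is a single constant $B_2>0$ with $||\psi_\theta(i,u)||\leq B_2$ for all $\theta\in\Theta^P$, $i\in S^-$, $u\in A$.

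Second, reusing $|g(\cdot,\cdot,\cdot)|\leq B$ and $||\phi(\cdot)||\leq B_1$ as in the previous lemma, each temporal-difference term obeys $|d_k(n)|\leq B+2B_1||v_n||$, so the empirical sum is controlled by
\[
\Big\|\sum_{k=0}^{\tau'(n)-1}d_k(n)\psi_{\theta_n}(i_k(n),u_k(n))\Big\|\leq B_2\,\tau'(n)\,(B+2B_1||v_n||).
\]
In parallel I would bound the mean field: since $\sum_{i\in S^-}h^\theta(i)$ equals the (uniformly bounded) expected episode length $E[\tau'(n)\mid\mathcal{F}^1_n]$ and the bracketed TD-mean in the definition of $f^2$ is again at most $B+2B_1||v_n||$ in magnitude while $\sum_{u}\pi(i,u)=1$, one gets $||f^2(\theta_n,v_n)||\leq C(1+||v_n||)$ for a suitable constant. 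Combining the two bounds yields, almost surely,
\[
||M^2_{n+1}||\leq B_2\,\tau'(n)\,(B+2B_1||v_n||)+C(1+||v_n||)\leq C_5'\,(\tau'(n)+1)(1+||v_n||)
\]
for an appropriate $C_5'>0$.

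Third, I would square and take the conditional expectation given $\mathcal{F}^1_n$. Because $v_n$ is $\mathcal{F}^1_n$-measurable, the factor $(1+||v_n||)^2$ pulls out, and the remaining randomness sits in $(\tau'(n)+1)^2$. The decisive input is precisely the uniform bound on $E[\tau'(n)^2\mid\mathcal{F}^1_n]$ already established inside the proof of Lemma~\ref{critic-noise-condition} via eq.~(5) of \cite{sobel1982} (discount factor and reward both set to one) together with the compactness of the simplex $\Delta$. This gives $E[(\tau'(n)+1)^2\mid\mathcal{F}^1_n]\leq \tilde{C}$ uniformly, whence $E[||M^2_{n+1}||^2\mid\mathcal{F}^1_n]\leq C_5'^2\tilde{C}\,(1+||v_n||)^2$; applying $(1+a)^2\leq 2(1+a^2)$ then delivers the claim with $C_6=2C_5'^2\tilde{C}$.

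The hard part, such as it is, lies entirely in the first step: one must check that the projection region $\Theta^P$ together with Assumption~\ref{a7} really supplies a single constant $B_2$ valid for every $(i,u)$, which is where the positivity $\pi_\theta(i,u)>0$ (guaranteeing the logarithm and its gradient are finite and continuous) is used. Everything downstream is an unchanged replay of the episode-length moment bounds, so I expect no further obstacles.
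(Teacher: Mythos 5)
Your proposal is correct and follows essentially the same route as the paper, which simply bounds $\|\psi_{\theta_n}(\cdot,\cdot)\|\leq B_2$ via Assumption~\ref{a7} and the compactness of $\Theta^P$, and then invokes the same episode-length moment bounds as in Lemma~\ref{critic-noise-condition}. Your write-up is in fact somewhat more careful than the paper's in keeping the random factor $\tau'(n)$ explicit until the conditional second moment is taken.
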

\begin{proof}
By Assumption~\ref{a7} and the fact that $\theta_n$ lies in a compact space, let $||\psi_{\theta_n}(.,.)||\leq B_2$. The claim follows from similar arguments as in Lemma~\ref{critic-noise-condition}.
\end{proof}

Hence the noise conditions of Chapter 8 of \cite{borkar2022} are satisfied. We will use the results of \cite{ramaswamy2017} to prove the stability of our critic updates \eqref{critic1}.
\begin{lemma}\label{fa-stabilty-lemma}
$\sup_n ||v_n||<\infty$ a.s.
\end{lemma}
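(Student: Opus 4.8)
The plan is to invoke the Borkar--Meyn-type stability theorem of \cite{ramaswamy2017}. Since the critic runs on the faster timescale, the recursion \eqref{critic1} has already been cast in the standard form $v_{n+1}=v_n+a(n)(f^1(\theta_n,v_n)+M^1_{n+1})$ with $f^1(\theta,v)=A^1(\theta)v+b^1(\theta)$. As $f^1(\theta,\cdot)$ is affine it is Lipschitz in $v$, uniformly over the compact projection set $\Theta^P$ in which $\theta_n$ lives by virtue of $\Gamma^1$. The scaling (homogeneous) limit demanded by \cite{ramaswamy2017} is $f^1_\infty(\theta,v):=\lim_{c\to\infty}f^1(\theta,cv)/c=A^1(\theta)v$, with associated scaled ODE $\dot{v}=A^1(\theta)v$. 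By that theorem, stability of the iterates reduces to showing that the origin is the unique globally asymptotically stable equilibrium of the scaled ODE, i.e.\ that $A^1(\theta)$ is Hurwitz, and that this holds uniformly over $\Theta^P$.

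The key step is therefore to prove that $A^1(\theta)=\Phi^T H^\theta(P_\theta-I)\Phi$ is Hurwitz, which I would establish by showing its symmetric part is negative definite. This follows if $P_\theta$ is a strict contraction in the occupancy-weighted norm $\|y\|_{H^\theta}^2:=y^T H^\theta y$. Because every policy is proper (Assumption~\ref{a1}), $P_\theta$ is substochastic, so Jensen's inequality gives $(P_\theta y)(i)^2\le \sum_{j\in S^-}P_\theta(i,j)\,y(j)^2$. Summing against the occupancy weights and using the balance relation ${h^\theta}^T(I-P_\theta)={h^\theta_0}^T$, which is immediate from the definition ${h^\theta}^T={h^\theta_0}^T(I-P_\theta)^{-1}$, yields $\|P_\theta y\|_{H^\theta}^2\le \|y\|_{H^\theta}^2-\sum_{j\in S^-}h^\theta_0(j)\,y(j)^2\le\|y\|_{H^\theta}^2$, with the bound strict for $y\neq 0$ since the spectral radius of $P_\theta$ is less than one by Lemma~\ref{lem1}. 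A Cauchy--Schwarz estimate in the $H^\theta$ inner product then gives, for $y\neq 0$, $y^T H^\theta(P_\theta-I)y=y^T H^\theta P_\theta y-\|y\|_{H^\theta}^2\le \|P_\theta y\|_{H^\theta}\|y\|_{H^\theta}-\|y\|_{H^\theta}^2<0$. Since $h^\theta>0$ by Assumption~\ref{a5} and $\Phi$ has full column rank by Assumption~\ref{a6}, taking $y=\Phi x$ shows $x^T A^1(\theta)x<0$ for every $x\neq 0$, so $A^1(\theta)$ is Hurwitz; continuity of $\theta\mapsto A^1(\theta)$ on the compact set $\Theta^P$ then upgrades this to the uniform Hurwitz property required for the two-timescale argument.

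The remaining hypotheses of \cite{ramaswamy2017} are already in hand: the martingale-difference noise satisfies the quadratic growth condition $E[\|M^1_{n+1}\|^2\mid\mathcal{F}^1_n]\le C_4(1+\|v_n\|^2)$ by Lemma~\ref{critic-noise-condition}, and the step sizes obey the Robbins--Monro conditions of Assumption~\ref{a4}. Combining these with the global asymptotic stability of the origin for $\dot{v}=A^1(\theta)v$ established above, the theorem of \cite{ramaswamy2017} yields $\sup_n\|v_n\|<\infty$ a.s. The main obstacle I anticipate is the second paragraph, specifically proving the weighted contraction of $P_\theta$: unlike the classical temporal-difference analysis, the weighting here is the total expected occupancy $h^\theta$ rather than a stationary distribution, so the nonexpansiveness must be extracted from substochasticity together with the occupancy balance identity, and one must take care that the strictness survives (via $\rho(P_\theta)<1$) and holds uniformly over the slowly varying parameter $\theta_n\in\Theta^P$.
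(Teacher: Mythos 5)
Your proposal is correct and follows the same overall strategy as the paper: both invoke Theorem 1 of \cite{ramaswamy2017} and reduce stability to the negative definiteness of $A^1(\cdot)$, so that $W(v)=v^Tv$ is a Lyapunov function for the scaled dynamics with the origin as global attractor. Two points of comparison are worth recording. First, the paper handles the time-varying policy by recasting \eqref{critic1} as a stochastic recursive \emph{inclusion} with the Marchaud map $f^3(v)=\{A^1(\pi)v+b^1(\pi)\mid\pi\in\Delta\}$ over the whole simplex, verifying convexity, compactness and upper semi-continuity of this map and of its scaled limit $f^3_\infty(v)=\{A^1(\pi)v\mid\pi\in\Delta\}$; your "freeze $\theta$, then appeal to uniformity over $\Theta^P$" phrasing needs exactly this repackaging to be airtight, since Theorem 1 of \cite{ramaswamy2017} is stated for inclusions precisely so that the step-by-step variation of $\theta_n$ is absorbed into the set-valued map. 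Your uniform negative definiteness over the compact parameter set is, however, precisely what makes $v^Tv$ a common Lyapunov function for that inclusion, so the gap is one of framing rather than substance. Second, where the paper simply cites Lemma 6.10 of \cite{NDPbook} for the negative definiteness of $A^1(\pi)=\Phi^TH^\pi(P_\pi-I)\Phi$, you actually prove it: the substochasticity/Jensen bound, the occupancy balance identity ${h^\theta}^T(I-P_\theta)={h^\theta_0}^T$ giving $\|P_\theta y\|_{H^\theta}^2\le\|y\|_{H^\theta}^2-\sum_j h^\theta_0(j)y(j)^2$, and the Cauchy--Schwarz step with strictness recovered from $\rho(P_\theta)<1$ (equality would force $P_\theta y=y$) is exactly the argument underlying the cited lemma, and correctly uses Assumption~\ref{a5} for $h^\theta>0$ and Assumption~\ref{a6} for full column rank of $\Phi$. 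This added detail is a genuine strengthening of the exposition, not a deviation from the paper's route.
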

\begin{proof}
Let $\Delta$ be as in \eqref{simplex-def}. We can write \eqref{critic1} as the following Stochastic Recursive Inclusion (SRI):
\begin{equation}\label{critic-sri}
\begin{split}
&v_{n+1}=v_n+a(n)(A^1(\pi)v_n+b^1(\pi)+M^1_{n+1}),
\end{split}
\end{equation}
where $\pi \in \Delta$. We will verify (A1)-(A5) of \cite{ramaswamy2017} to conclude. For $v \in \mathbb{R}^{d_1}$, let
\begin{equation}\label{f1-def}
f^3(v)=\{A^1(\pi)v+b^1(\pi)|\pi \in \Delta\}.
\end{equation}
Let $f^3(v)_p,p=0,\dots,d_1-1$, be the $p$th component of $f^3(v)$ which means $f^3(v)=\displaystyle \bigtimes_{p=0,\dots,d_1-1} f^3(v)_p$. Then $f^3(v)_p$ can be written as:
\[
\begin{split}
&f^3(v)_p=\Big\{ \sum_{i \in S^-} h^\pi(i)\sum_{u \in A} \pi(i,u)\Big(\sum_{j \in S}p(i,u,j)g(i,u,j)\\
&\sum_{j\in S^-}p(i,u,j)v^T\phi(j)-v^T\phi(i)\Big)\phi(i)_p\Big|\pi \in \Delta\Big\},
\end{split}
\]
where $\phi(i)_p$ is the $p$-th component of $\phi(i)$. Also, $f^3(v)_p$ is a continuous function of $\pi$ and since $\Delta$ is compact, $f^3(v)_p$ is a closed interval. Hence for any $v \in \mathbb{R}^{d_1}$, $f^3(v)$ is convex and compact. For any $v \in \mathbb{R}^{d_1}$, it is easy to see that $\sup_{y\in f^3(v)}||y||\leq C_5(1+||v||)$ for some $C_5>0$. $A^1(\pi)v+b^1(\pi)$ is a continuous function of $(\pi,v)$, hence it follows that $f^3(v)$ is upper semi-continuous. (A1) of \cite{ramaswamy2017} is verified. (A2) of \cite{ramaswamy2017} is satisfied by our step-size conditions. By Lemma~\ref{critic-noise-condition}, (A3) of \cite{ramaswamy2017} is verified. Let us now verify (A4) of \cite{ramaswamy2017}. For $c\geq 1$, let
\[
\begin{split}
&f^3_c(v)=\Big\{\frac{b^1(\pi)+A^1(\pi)cv}{c}, \pi \in \Delta \Big\},\\
&f^3_\infty(v)=\lim_{c \to \infty} f_c(v)=\{A^1(\pi)v|\pi \in \Delta\}.
\end{split}
\]
It can be seen from Lemma 6.10 of \cite{NDPbook} and the following argument that $A^1(\pi)$ is negative definite for any $\pi \in \Delta$. Hence,  $W(v)= v^Tv$ serves as a Lyapunov function for the DI $\dot{v}(t) \in f^3_\infty(v(t))$ with $0 \in \mathbb{R}^{d_1}$ as its global attractor. Hence, (A4) of \cite{ramaswamy2017} is verified. Next, let $v_n \to v$, $y_n \to y$ and $c_n \uparrow \infty$ such that $y_n \in f^3_{c_n}(v_n)$. Now $f^3_c(v)$ is an upper semi-continuous function of $(1/c,v)$, hence,  $y \in f^3_\infty(v)$. Thus, (A5) of \cite{ramaswamy2017} is also verified. The claim now follows from Theorem 1 of \cite{ramaswamy2017}.
\end{proof}
Also $\{\theta_n\}$ obtained from \eqref{critic1} is bounded due to projection. Hence (A3) of Chapter 8 of \cite{borkar2022} is verified.
Let $\bar{\Gamma^1}$ be the directional derivative of $\Gamma^1$ as in \eqref{dir-def}. Then, the recursion \eqref{actor1} can be written as:
\[
\theta_{n+1}=\theta_n+b(n)[\bar{\Gamma^1}(\theta_n;f^2(\theta_n,v_n)+M^2_{n+1})+o(b(n))].
\]

By arguing as in Lemma 8.1 of \cite{borkar2022}, $(v_n,\theta_n)$ in \eqref{critic1}-\eqref{actor1} can be seen to track the ODEs:
\[
\dot{v}(t)=f^1(\theta(t),v(t)),\dot{\theta}(t)=0.
\]
By the above, $\theta(t)\equiv \theta$ (a time-invariant actor parameter). 
Let us then consider the ODE:
\begin{equation}\label{fa-ode-1}
\dot{v}(t)=f^1(\theta,v(t)).
\end{equation}
We have the following:
\begin{lemma}
\label{fa-lem1}
$v^\theta={A^1(\theta)}^{-1}b^1(\theta)$ is the unique globally asymptotically stable equilibrium of \eqref{fa-ode-1}. Also $v^\theta$ is a Lipschitz continuous function of $\theta$.
\end{lemma}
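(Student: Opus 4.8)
The plan is to use the fact that \eqref{fa-ode-1} is an affine ODE, $\dot v = A^1(\theta)v + b^1(\theta)$, whose linear part $A^1(\theta)=\Phi^T H^\theta(P_\theta-I)\Phi$ has already been shown to be negative definite in the proof of Lemma~\ref{fa-stabilty-lemma} (via Lemma 6.10 of \cite{NDPbook}). Negative definiteness forces every eigenvalue of $A^1(\theta)$ to have strictly negative real part, so $A^1(\theta)$ is nonsingular and \eqref{fa-ode-1} has a unique equilibrium $v^\theta$ solving $A^1(\theta)v^\theta + b^1(\theta) = 0$, which is exactly the stated closed form. This parallels the tabular critic analysis in Lemma~\ref{lem1}.

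For global asymptotic stability I would take the candidate Lyapunov function $W(v)=(v-v^\theta)^T(v-v^\theta)$. Writing $b^1(\theta)=-A^1(\theta)v^\theta$ from the equilibrium condition gives $\dot v = A^1(\theta)(v-v^\theta)$, so along trajectories $\dot W = 2(v-v^\theta)^T A^1(\theta)(v-v^\theta)$, which is strictly negative for $v\neq v^\theta$ by negative definiteness of $A^1(\theta)$ and vanishes only at $v^\theta$. Since $W$ is radially unbounded, standard Lyapunov theory then yields that $v^\theta$ is the globally asymptotically stable equilibrium.

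For the Lipschitz claim I would show that $\theta \mapsto v^\theta$ is continuously differentiable on the compact parameter set $\Theta^P$. The entries of $P_\theta$ and $R_\theta$ are smooth in $\theta$ since $\pi_\theta$ is twice continuously differentiable (Assumption~\ref{a7}); the occupation-measure weighting $H^\theta$ is built from $h^\theta = {h^\theta_0}^T(I-P_\theta)^{-1}$, which is smooth because $(I-P_\theta)$ is invertible by Lemma~\ref{lem1}. Hence $A^1(\theta)$ and $b^1(\theta)$ are smooth, and since $A^1(\theta)$ is uniformly negative definite over $\Theta^P$ its inverse is smooth with bounded derivative (using $\partial_{\theta_\ell}{A^1}^{-1} = -{A^1}^{-1}(\partial_{\theta_\ell}A^1){A^1}^{-1}$). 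Therefore $v^\theta$ solving $A^1(\theta)v^\theta=-b^1(\theta)$ is $C^1$, and compactness of $\Theta^P$ upgrades this to a global Lipschitz bound, mirroring the Cramer's-rule argument of Lemma~\ref{lem1}.

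The main obstacle is obtaining the required smoothness and, above all, the uniform invertibility of $A^1(\theta)$ over the entire parameter set rather than at a single $\theta$: one must ensure, using Assumption~\ref{a5}, that the diagonal entries of $H^\theta$ stay strictly positive and that the negative-definiteness bound of Lemma~\ref{fa-stabilty-lemma} holds uniformly in $\theta$, so that the smallest eigenvalue of $-A^1(\theta)$ is bounded away from zero on $\Theta^P$. Once this uniform spectral control is secured, both the global asymptotic stability and the Lipschitz continuity follow by the routine arguments above.
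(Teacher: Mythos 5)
Your proposal is correct and follows essentially the same route as the paper: negative definiteness of $A^1(\theta)$ (via Lemma 6.10 of \cite{NDPbook}) gives uniqueness and global asymptotic stability of the equilibrium of \eqref{fa-ode-1}, and smoothness of $\theta\mapsto {A^1(\theta)}^{-1}$ and $b^1(\theta)$ together with compactness of $\Theta^P$ gives the Lipschitz property; you merely spell out the standard quadratic-Lyapunov and uniform-invertibility details that the paper leaves implicit. (Minor note: the equilibrium solves $A^1(\theta)v^\theta+b^1(\theta)=0$, i.e.\ $v^\theta=-{A^1(\theta)}^{-1}b^1(\theta)$, so the sign in the paper's statement is a typo and your derivation has it right.)
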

\begin{proof}
It can be seen from Lemma 6.10 of \cite{NDPbook} and the following argument that $A^1(\theta)$ is negative definite. The first claim follows.

By Assumption~\ref{a7} and the fact that $\theta$ lies in a compact set, it can be seen that $\nabla_\theta {A^1(\theta)}^{-1}$ and $\nabla_\theta b^1(\theta)$ are uniformly bounded. The second claim follows.
\end{proof}
Hence (A1) of Chapter 8 of \cite{borkar2022} is verified.
\begin{lemma}\label{fa-lem2}
We have that $v_n\to v^{\theta}$ a.s.
\end{lemma}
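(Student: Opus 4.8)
The plan is to obtain this convergence as a direct consequence of the two-timescale stochastic approximation theory of Chapter 8 of \cite{borkar2022}, applied to the coupled pair \eqref{critic1}--\eqref{actor1}. Since Assumption~\ref{a4}(ii) gives $\frac{b(n)}{a(n)}\to 0$, the critic $v_n$ evolves on the faster timescale while the actor $\theta_n$ evolves on the slower one; consequently $\theta_n$ is seen by the critic recursion as quasi-static, and for the purpose of the faster analysis one freezes $\theta_n\approx\theta$ and studies the ODE \eqref{fa-ode-1}, namely $\dot{v}(t)=f^1(\theta,v(t))$.

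First I would assemble the hypotheses that the theorem of Chapter 8 of \cite{borkar2022} requires, all of which have already been established above. The drift $f^1(\theta,v)=A^1(\theta)v+b^1(\theta)$ is affine in $v$ and, by Lemma~\ref{fa-lem1}, for each fixed $\theta$ the ODE \eqref{fa-ode-1} admits the unique globally asymptotically stable equilibrium $v^\theta={A^1(\theta)}^{-1}b^1(\theta)$, with $v^\theta$ Lipschitz continuous in $\theta$; this verifies (A1). The step-size requirements are built into Assumption~\ref{a4}. Boundedness of the iterates holds since $\sup_n\|v_n\|<\infty$ a.s.\ by Lemma~\ref{fa-stabilty-lemma} and $\{\theta_n\}$ is bounded by the projection $\Gamma^1$. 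Finally, the martingale-difference noise condition follows from Lemma~\ref{critic-noise-condition} (together with Lemma~\ref{actor-noise-condition}).

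With every hypothesis in place, I would invoke the two-timescale convergence theorem of Chapter 8 of \cite{borkar2022} to conclude that the fast iterate tracks the slowly drifting equilibrium of \eqref{fa-ode-1}, i.e.\ $\|v_n-v^{\theta_n}\|\to 0$ a.s. Since $v^\theta$ is the unique attractor and is Lipschitz in $\theta$, this is precisely the assertion $v_n\to v^{\theta}$ a.s.\ in the sense of the frozen-$\theta$ analysis. The only subtlety worth flagging is interpretational rather than technical: the convergence is to the moving target $v^{\theta_n}$, so the statement $v_n\to v^\theta$ must be read as tracking of the equilibrium manifold $\{(\theta,v^\theta)\}$, which is exactly what the timescale separation delivers; no additional estimate beyond those already proved is needed.
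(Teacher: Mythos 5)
Your proposal is correct and follows exactly the paper's route: the paper's proof is a one-line appeal to Lemma 8.1 of \cite{borkar2022}, relying on the same previously established ingredients you list (Lemma~\ref{fa-lem1} for the unique globally asymptotically stable equilibrium and its Lipschitz dependence on $\theta$, Lemma~\ref{fa-stabilty-lemma} and the projection for boundedness, Lemmas~\ref{critic-noise-condition}--\ref{actor-noise-condition} for the noise, and Assumption~\ref{a4} for the timescales). Your remark that the conclusion should be read as $\|v_n - v^{\theta_n}\|\to 0$, i.e.\ tracking of the equilibrium map under the frozen-$\theta$ convention, matches the paper's intended interpretation.
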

\begin{proof}
Follows from Lemma 8.1 of \cite{borkar2022}.
\end{proof}
\begin{remark}
It can be seen that $f^1(\theta,v)$ is not a Lipschitz continuous function but a locally Lipschitz function for $||v||\leq B'_1$, for $B'_1>0$. But since by Lemma~\ref{fa-lem1}, the ODE \eqref{fa-ode-1} has a globally asymptotically stable equilibrium, the ODE takes values in a bounded set and the arguments of Lemma 8.1 of \cite{borkar2022} still hold.
\end{remark}
Hence, the recursion \eqref{actor1} can be written as:
\[
\theta_{n+1}=\Gamma^1(\theta_n+b(n)(f^2(\theta_n,v^{\theta_n})+M^2_{n+1})+\varepsilon^1_n).
\]
By Lemma~\ref{fa-lem2}, $\varepsilon^1_n \to 0$, almost surely.
Let us consider the ODE:
\begin{equation}\label{fa-ode2}
\dot{\theta}=\bar{\Gamma^1}(\theta;f^2(\theta,v^\theta)).
\end{equation}
Let us define a linearly interpolated (continuous) trajectory $\bar{\theta}(t),t \geq 0$ of $\theta_n$ as follows (cf.~ Chapter 2 of \cite{borkar2022}): Let $t(n) = \sum_{m=0}^{n-1} b(m)$, $n\geq 1$, with $t(0)=0$. Then $\bar{\theta}(t(n))=\theta_n$, $\forall n\geq 0$, and for $t\in (t(n),t(n+1))$, $\bar{\theta}(t)$ is obtained via a continuous linear interpolation of $\bar{\theta}(t(n))=\theta_n$ and $\bar{\theta}(t(n+1))=\theta_{n+1}$, $\forall n$. 

\begin{lemma}\label{fa-lem3}
The limit points of $\bar{\theta}(t+\cdot)$, $t\geq 0$, satisfy the ODE \eqref{fa-ode2}.
\end{lemma}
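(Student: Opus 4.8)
The plan is to invoke the ODE method for projected stochastic approximation along the slower (actor) timescale, following the interpolation construction of Chapter~2 and the projected-recursion arguments of Lemma~8.1 of \cite{borkar2022}. Because the critic has already been shown to equilibrate (Lemma~\ref{fa-lem2}), the actor recursion \eqref{actor1} has already been rewritten in the form
\[
\theta_{n+1}=\Gamma^1\big(\theta_n+b(n)(f^2(\theta_n,v^{\theta_n})+M^2_{n+1})+\varepsilon^1_n\big),
\]
with $\varepsilon^1_n\to 0$ almost surely. It therefore suffices to verify that this recursion tracks the projected ODE \eqref{fa-ode2}.

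First I would establish the regularity of the driving vector field. By Assumption~\ref{a7}, $\psi_\theta(i,u)=\nabla_\theta\log\pi_\theta(i,u)$ is continuously differentiable, and since $\theta$ is confined to the compact set $\Theta^P$ by the projection $\Gamma^1$, both $\psi_\theta$ and its derivative are uniformly bounded. Combined with the Lipschitz continuity of $v^\theta$ in $\theta$ established in Lemma~\ref{fa-lem1}, this shows that $\theta\mapsto f^2(\theta,v^\theta)$ is Lipschitz continuous on $\Theta^P$, so the projected ODE \eqref{fa-ode2} is well posed and its directional-derivative right-hand side $\bar{\Gamma^1}(\theta;f^2(\theta,v^\theta))$ is the usual Filippov-type projected vector field.

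Next I would handle the martingale noise. The sequence $M^2_{n+1}$ is an $\{\mathcal{F}^1_n\}$-martingale difference, and by Lemma~\ref{actor-noise-condition} we have $E[\|M^2_{n+1}\|^2\mid\mathcal{F}^1_n]\leq C_6(1+\|v_n\|^2)$. Using the stability of the critic iterates from Lemma~\ref{fa-stabilty-lemma} ($\sup_n\|v_n\|<\infty$ a.s.) together with the square-summability $\sum_n b(n)^2<\infty$ from Assumption~\ref{a4}(i), the quadratic variation of the martingale $\sum_m b(m)M^2_{m+1}$ is almost surely finite, so by the martingale convergence theorem this martingale converges almost surely. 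Hence the noise contribution is asymptotically negligible on the slow timescale.

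With the Lipschitz continuity of the vector field, the vanishing of $\varepsilon^1_n$, the boundedness of $\{\theta_n\}$ (automatic from the projection), and the asymptotic negligibility of the martingale noise all in place, the projected ODE method of \cite{borkar2022} applies and yields that every limit point of the interpolated trajectory $\bar{\theta}(t+\cdot)$ is a trajectory of \eqref{fa-ode2}. The main subtlety, rather than a genuine obstacle, is the bookkeeping around the projection operator: one must confirm that the per-step correction introduced by replacing $\Gamma^1$ with its directional derivative $\bar{\Gamma^1}$ is $o(b(n))$, so that it does not perturb the limiting dynamics. This is precisely the content of the standard projected ODE lemma once the regularity and noise conditions above have been verified.
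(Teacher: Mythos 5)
Your proposal follows essentially the same route as the paper: establish Lipschitz continuity of $\theta\mapsto f^2(\theta,v^\theta)$ from Assumption~\ref{a7} and compactness of $\Theta^P$, show the martingale $\sum_m b(m)M^2_{m+1}$ converges almost surely using Lemma~\ref{actor-noise-condition}, Lemma~\ref{fa-stabilty-lemma} and square-summability of $b(n)$, and then invoke the standard projected stochastic approximation result (the paper cites Theorem 5.3.1 of \cite{kushner1978} where you cite the equivalent machinery in \cite{borkar2022}). The argument is correct and the only differences are cosmetic.
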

\begin{proof}
By Assumption~\ref{a7} and the fact that $\theta$ lies in a compact set, $\nabla_\theta \pi_\theta(i,u)$ and $\nabla^2_\theta \pi_\theta(i,u)$ are uniformly bounded. Hence, it can be seen that $\nabla_\theta f^2(\theta.v^\theta)$ is uniformly bounded. So $f^2$ is a Lipschitz continuous function of $\theta$. By Lemma~\ref{fa-stabilty-lemma} and Lemma~\ref{actor-noise-condition}, $E[||M^2_{n+1}||^2|\mathcal{F}^1_n]<\infty$ a.s. Hence the martingale sequence  $\sum_{n=0}^{m} b(n) M^2_{n+1}$, $m\geq 0$, converges almost surely (see Appendix C of \cite{borkar2022}). The claim follows by Theorem 5.3.1 of \cite{kushner1978}.
\end{proof}

Let $Q^\theta$ be as in \eqref{Q-def}. From Chapter 13 of \cite{SuttonBarto2018}, the gradient of $J(\theta)$ in \eqref{avg} is given as:
\[
\nabla_\theta J(\theta)=\sum_{i \in S^-} h^\theta(i) \sum_{u \in A}\nabla_\theta \pi_\theta(i,u) Q^\theta(i,u).
\]
Let us define the set:
\begin{equation}\label{Theta1-def}
\begin{split}
\Theta_1=\{\theta\in \Theta^P|\bar{\Gamma^1}(\theta;\nabla_\theta J(\theta))=0\},\\
\Theta_1^\epsilon=\{\theta \in \Theta^P|\inf_{\theta_1 \in \Theta_1}||\theta-\theta_1||\leq\epsilon\},
\end{split}
\end{equation}
where $\Theta^\epsilon_1$ is the $\epsilon$-neighborhood of $\Theta_1$. Let us define the approximation error:
\[
e'_1(\theta)=f^2(\theta,v^\theta)-\nabla_\theta J(\theta).
\]
\begin{remark}
The above error can be reduced if one selects good feature vectors that satisfy Assumption~\ref{a6}.
\end{remark}
\begin{lemma}\label{fa-lem4}
For $\epsilon>0$, $\exists \delta>0$ such that if $\sup_\theta||e'_1(\theta)||< \delta$, then $\theta_n \to \Theta^\epsilon_1$ almost surely.
\end{lemma}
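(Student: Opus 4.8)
The plan is to view ODE~\eqref{fa-ode2} as a bounded deterministic perturbation of the ideal projected-gradient flow $\dot\theta=\bar{\Gamma^1}(\theta;\nabla_\theta J(\theta))$, whose equilibrium set is exactly $\Theta_1$ by \eqref{Theta1-def}, and then to combine a Lyapunov/total-stability estimate for that flow with the stochastic tracking already established in Lemma~\ref{fa-lem3}. Writing $f^2(\theta,v^\theta)=\nabla_\theta J(\theta)+e'_1(\theta)$, ODE~\eqref{fa-ode2} becomes $\dot\theta=\bar{\Gamma^1}(\theta;\nabla_\theta J(\theta)+e'_1(\theta))$, which differs from the ideal flow only through the bias $e'_1$, whose norm is uniformly below $\delta$ on the compact set $\Theta^P$. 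Thus the heavy lifting (martingale-noise control, substitution of the critic equilibrium $v^{\theta_n}$, vanishing of $\varepsilon^1_n$) is already done, and what remains is a purely deterministic perturbation argument for the limiting ODE.

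First I would show that $J$ serves as a Lyapunov function for the ideal flow. Since $\bar{\Gamma^1}(\theta;\cdot)$ is the projection onto the tangent cone of $\Theta^P$ at $\theta$, one has the identity $\langle\nabla_\theta J(\theta),\bar{\Gamma^1}(\theta;\nabla_\theta J(\theta))\rangle=\|\bar{\Gamma^1}(\theta;\nabla_\theta J(\theta))\|^2$, which vanishes exactly on $\Theta_1$; hence $J$ is strictly monotone along the ideal flow off $\Theta_1$. By Assumptions~\ref{a1} and~\ref{a7} and compactness of $\Theta^P$, $J$ and $\nabla_\theta J$ are well defined, continuous, and bounded, so a standard compactness argument gives $\alpha(\epsilon):=\inf_{\theta\in\Theta^P\setminus\Theta_1^\epsilon}\|\bar{\Gamma^1}(\theta;\nabla_\theta J(\theta))\|^2>0$; that is, the Lyapunov change is uniformly bounded away from zero outside the $\epsilon$-neighborhood $\Theta_1^\epsilon$.

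Next I would quantify the effect of $e'_1$. Using that the tangent-cone projection is non-expansive in its direction argument, $\|\bar{\Gamma^1}(\theta;\nabla_\theta J(\theta)+e'_1(\theta))-\bar{\Gamma^1}(\theta;\nabla_\theta J(\theta))\|\le\|e'_1(\theta)\|<\delta$, so with $L:=\sup_{\theta\in\Theta^P}\|\nabla_\theta J(\theta)\|$ the Lyapunov derivative of $J$ along \eqref{fa-ode2} differs from that along the ideal flow by at most $L\delta$. Choosing $\delta<\alpha(\epsilon)/L$ keeps the Lyapunov derivative of the same monotone sign, with magnitude $\ge\alpha(\epsilon)-L\delta>0$, on all of $\Theta^P\setminus\Theta_1^\epsilon$. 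Consequently every trajectory of \eqref{fa-ode2} is driven into $\Theta_1^\epsilon$ and cannot return to its complement; i.e.\ $\Theta_1$ is totally stable under the persistent perturbation $e'_1$.

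Finally I would invoke Lemma~\ref{fa-lem3}: the interpolated actor trajectory $\bar{\theta}(t+\cdot)$ tracks \eqref{fa-ode2}, so its limit set is a compact internally chain transitive (hence invariant) set for the perturbed flow, which by the previous step lies in $\Theta_1^\epsilon$. This yields $\theta_n\to\Theta_1^\epsilon$ almost surely. The step I expect to be the main obstacle is the uniform strict-descent estimate: one must push the non-smooth directional derivative $\bar{\Gamma^1}$ through the Lyapunov computation, verify that the $O(\delta)$ perturbation bound holds uniformly over the boundary region rather than merely pointwise, and upgrade ``enters $\Theta_1^\epsilon$'' to ``remains in $\Theta_1^\epsilon$'' for the perturbed rather than the ideal flow.
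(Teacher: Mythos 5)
Your proof is correct in substance and reaches the same conclusion, but it takes a more self-contained route than the paper. The paper's own proof is short: it notes $\nabla_\theta J$ is Lipschitz, uses Gronwall's inequality to show the interpolated iterates form a $\delta$-perturbed trajectory of the ideal flow \eqref{fa-ode3}, asserts that $\Theta_1$ is its set of stable equilibria, and then invokes Theorem 1 of \cite{hirsch1989} as a black box --- that theorem being precisely the total-stability statement (for every $\epsilon$ there is a $\delta$ such that $\delta$-perturbed trajectories converge to the $\epsilon$-neighborhood of the attractor). You instead route the stochastic part through Lemma~\ref{fa-lem3} (tracking the perturbed ODE \eqref{fa-ode2}) and then unpack Hirsch's theorem by hand: you exhibit $J$ itself as the Lyapunov function via the tangent-cone identity $\langle\nabla_\theta J,\bar{\Gamma^1}(\theta;\nabla_\theta J)\rangle=\|\bar{\Gamma^1}(\theta;\nabla_\theta J)\|^2$, extract a uniform descent rate $\alpha(\epsilon)$ off $\Theta_1^\epsilon$, and use non-expansiveness of the cone projection to get the explicit threshold $\delta<\alpha(\epsilon)/L$. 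What your version buys is an explicit $\epsilon$--$\delta$ relationship and a justification of the attractor structure of $\Theta_1$ that the paper merely asserts; what the citation buys the paper is not having to handle the two points you correctly flag as delicate, namely (i) the map $\theta\mapsto\|\bar{\Gamma^1}(\theta;\nabla_\theta J(\theta))\|$ is only lower semicontinuous at the boundary of $\Theta^P$ (the tangent cone jumps), so the positivity of $\alpha(\epsilon)$ needs the l.s.c.-on-a-compact-closure argument rather than plain continuity, and (ii) the upgrade from ``enters $\Theta_1^\epsilon$'' to ``limit set lies in $\Theta_1^\epsilon$,'' which is cleanest via the finite-total-time-outside argument or via internal chain transitivity as you suggest. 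One cosmetic caution: keep the sign bookkeeping consistent --- the recursion \eqref{actor1} is a descent while $f^2$ is defined as the raw expected TD-weighted score, so the Lyapunov function is monotone in one fixed direction along the flow; your ``same monotone sign'' phrasing is fine but should be pinned down in a final write-up.
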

\begin{proof}
Let us consider the ODE:
\begin{equation}\label{fa-ode3}
\dot{\theta}=\bar{\Gamma^1}(\theta;\nabla_\theta J(\theta)).
\end{equation}
Since $\nabla_\theta \pi_\theta(i,a)$ and $\nabla^2_\theta \pi_\theta(i,a)$ are uniformly bounded, it can be seen that $\nabla_\theta J(\theta)$ is Lipschitz continuous. Now using Gronwall's inequality, it can be seen that the limit points of $\bar{\theta}(t+\cdot),t \geq 0$ is a perturbed trajectory of \eqref{fa-ode3}. Now $\Theta_1$ is the set of stable equilibrium points of \eqref{fa-ode3}. The claim follows from Theorem 1 of \cite{hirsch1989}.
\end{proof}
Under Assumption~\ref{a1} and Assumptions~\ref{a4}-\ref{a7}, we have the following main theorem.
\begin{theorem}
Given $\epsilon>0$, $\exists \delta>0$ such that if $\sup_\theta||e'_1(\theta)||< \delta$ then, the iterates $v_n,\theta_n,n\geq 0$ of our AC algorithm \eqref{critic1}-\eqref{actor1} converge almost surely to the set
\[
\{(v^{\theta^*},\theta^*)|\theta^*\in \Theta_1^\epsilon\}.
\]
\end{theorem}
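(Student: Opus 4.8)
The plan is to assemble the result from the lemmas already in place, exploiting the step-size separation $b(n)/a(n)\to 0$ of Assumption~\ref{a4}(ii) so that the standard two-timescale machinery of Chapter 8 of \cite{borkar2022} applies. First I would note that Lemma~\ref{fa-stabilty-lemma} gives $\sup_n||v_n||<\infty$ a.s., that the projection keeps $\{\theta_n\}$ bounded, and that Lemmas~\ref{critic-noise-condition} and~\ref{actor-noise-condition} supply the requisite noise bounds; these together place the coupled recursion \eqref{critic1}--\eqref{actor1} squarely in the two-timescale framework. On the faster timescale the actor parameter is effectively held fixed, and Lemma~\ref{fa-lem2} yields $v_n\to v^{\theta}$ a.s., i.e.\ the critic tracks the globally asymptotically stable equilibrium $v^\theta={A^1(\theta)}^{-1}b^1(\theta)$ of the ODE \eqref{fa-ode-1} (Lemma~\ref{fa-lem1}), so that effectively $||v_n-v^{\theta_n}||\to 0$.

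Next I would turn to the slower timescale. By Lemma~\ref{fa-lem3} the limit points of the interpolated actor trajectory $\bar{\theta}(t+\cdot)$ satisfy the ODE \eqref{fa-ode2}, $\dot{\theta}=\bar{\Gamma^1}(\theta;f^2(\theta,v^\theta))$. The key algebraic observation is the identity $f^2(\theta,v^\theta)=\nabla_\theta J(\theta)+e'_1(\theta)$, which follows directly from the definition of the approximation error $e'_1(\theta)$; hence \eqref{fa-ode2} is merely a perturbation of the true gradient-flow ODE \eqref{fa-ode3} by $e'_1(\theta)$. Consequently, whenever $\sup_\theta||e'_1(\theta)||<\delta$ is sufficiently small, Lemma~\ref{fa-lem4} delivers $\theta_n\to\Theta_1^\epsilon$ almost surely, which fixes the $\epsilon$--$\delta$ relationship claimed in the theorem.

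Finally I would combine the two conclusions. Since $||v_n-v^{\theta_n}||\to 0$ and $\theta_n\to\Theta_1^\epsilon$, the Lipschitz (indeed continuous) dependence of $v^\theta$ on $\theta$ from Lemma~\ref{fa-lem1} propagates the actor convergence to the critic, so the pair $(v_n,\theta_n)$ converges almost surely to the image set $\{(v^{\theta^*},\theta^*)\mid\theta^*\in\Theta_1^\epsilon\}$, as asserted. I expect the main obstacle not to lie in this final bookkeeping but in cleanly justifying the decoupling of the two timescales: the critic mean field $f^1(\theta,v)$ is only \emph{locally} Lipschitz, growing linearly in $v$, so one must argue — as in the remark following Lemma~\ref{fa-lem2} — that the globally asymptotically stable equilibrium confines the critic ODE to a bounded region, thereby legitimizing the application of Lemma~8.1 of \cite{borkar2022}. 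Ensuring the $\epsilon$--$\delta$ thresholds are carried through consistently from Lemma~\ref{fa-lem4} into the final statement is the one place where care is genuinely required.
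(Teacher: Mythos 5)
Your proposal is correct and follows essentially the same route as the paper, which likewise concludes directly from Lemma~\ref{fa-lem2} (critic tracking $v^{\theta}$ on the fast timescale) and Lemma~\ref{fa-lem4} (actor convergence to $\Theta_1^\epsilon$ under the small-error condition), with the continuity of $v^\theta$ carrying the conclusion to the pair $(v_n,\theta_n)$. The additional care you flag about the locally Lipschitz mean field is exactly the point the paper handles in the remark following Lemma~\ref{fa-lem2}, so nothing is missing.
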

\begin{proof}
The claim follows by Lemma~\ref{fa-lem2} and Lemma~\ref{fa-lem4}.
\end{proof}

\section{Numerical Experiments}\label{numerical}
\begin{figure*}
\centering
\includegraphics[width=1\textwidth]{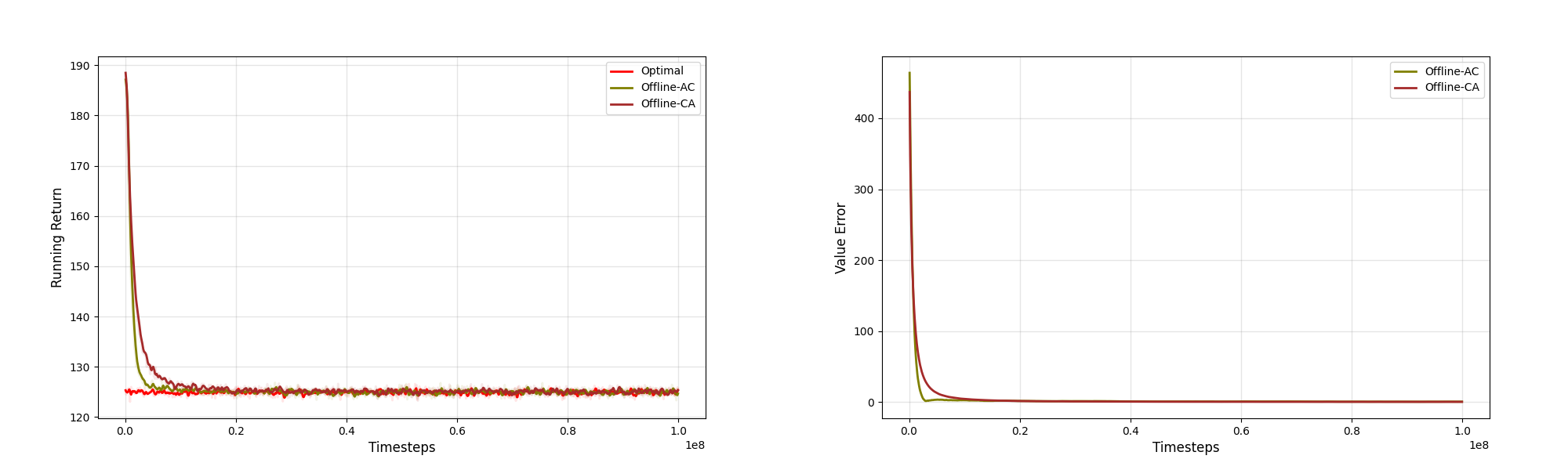}
\caption{Random MDP with State space size $|S|=20$, Action space size $|A|=4$}
\label{fig1}
\end{figure*}

\begin{figure*}
\centering
\includegraphics[width=1\textwidth]{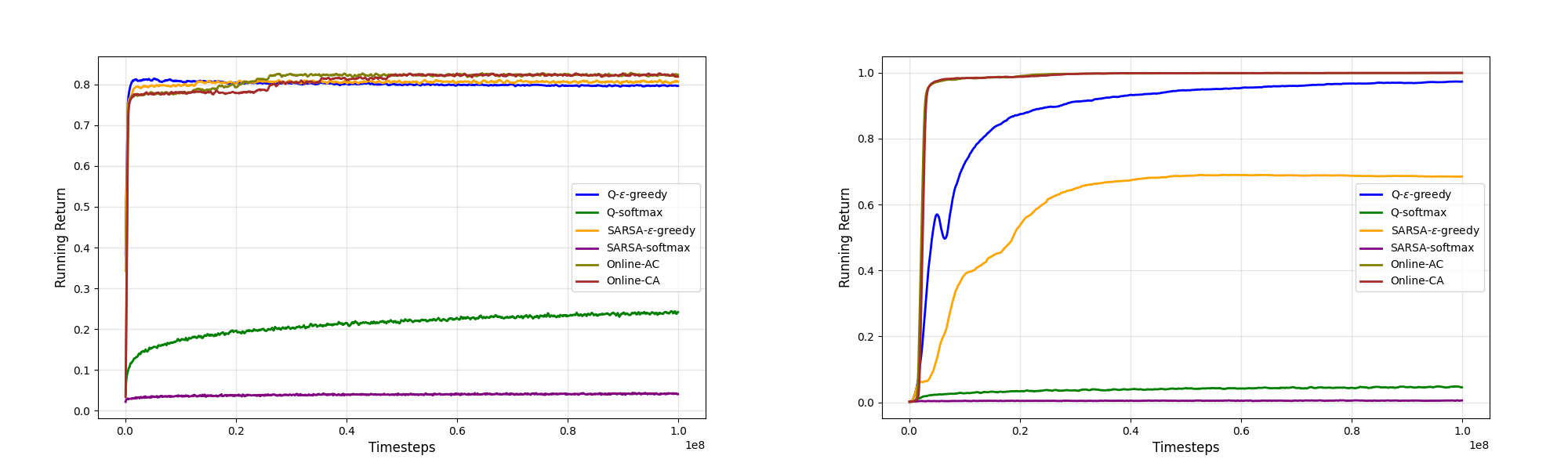}
\caption{FrozenLake gym Environemtnt with size 4x4(left) and 8x8(right)}
\label{fig2}
\end{figure*}

\begin{figure*}
\centering
\includegraphics[width=1\textwidth]{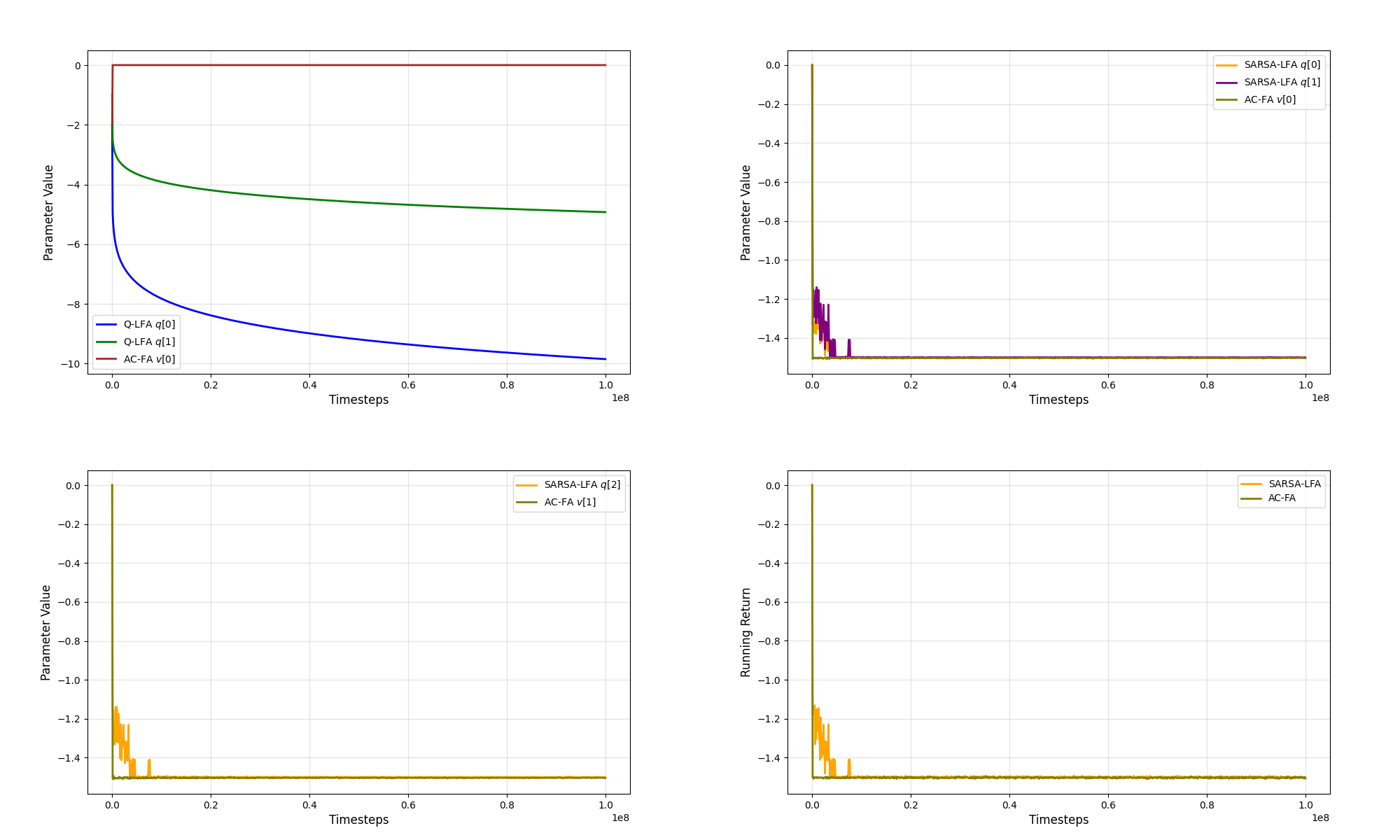}
\caption{Plots for comparison between AC-FA, Q-LFA and SARSA-LFA}
\label{fig3}
\end{figure*}

We perform experiments\footnote{All the code is available at  https://github.com/gsoumyajit/stochastic-shortest-path-ac} using our algorithms on various test beds. All the hyper-parameters used in the experiments have been optimized, see our implementation to check the actual hyper-parameters used in the experiments. All plots are averaged over 5 independent runs. As a verification that our tabular algorithms indeed converge to the optimal value function, we test them on a randomly generated MDP. Figure~\ref{fig1} shows experiments on a MDP with 20 states and 4 actions. The left plot of Figure~\ref{fig1} is for the running return. All the plots for running return in this section correspond to the average running return of the last $10^4$ episodes. The right plot in Figure~\ref{fig1} is for value error, which is calculated by the Euclidean distance between $V_n$ and $V^*$. We can see both the algorithms learn the optimal return and the corresponding value error goes to zero in either case. 

We next compare the performance of the online versions of our tabular algorithms AC and CA with Q-Learning and SARSA. The plots are given in Figure~\ref{fig2}. We perform these experiments on the FrozenLake gym environment with both $4\times 4$ and $8\times 8$ grid sizes. We test Q-Learning and SARSA with two exploration strategies: $\epsilon$-greedy with a decaying $\epsilon$-parameter and soft-max policy as proposed in \cite{singh2000convergence}. They are proposed as GLIE policies in \cite{singh2000convergence}, that ensure convergence of SARSA and also that all state-action pairs are sampled infinitely often in Q-Learning with proper exploitation. There are other types of exploration like UCB sampling \cite{wang2020} and Thompson sampling \cite{osband2013more}, but they need to be designed specifically for the SSP problem. We can see that although softmax exploration has convergence guarantees \cite{singh2000convergence}, it is extremely slow for both Q-Learning and SARSA in both $4\times 4$ and $8\times 8$ environments. Whereas $\epsilon$-greedy exploration with decaying $\epsilon$ has convergence guarantees \cite{singh2000convergence}, it is very slow for both Q-learning and SARSA in the $8\times 8$ environment. We use the decay rates for $\epsilon$-greedy and softmax exploration as suggested in \cite{singh2000convergence}. There is an improvement of softmax exploration via the mellowmax operator \cite{asadi2017}, but the calculation of the decay rate is computationally expensive, hence we do not compare that with our tabular algorithms. We can see that both our Online AC and CA algorithms perform better than Q-Learning and SARSA with both of the aforementioned types of exploration.

We next test our AC algorithm with function approximation (AC-FA) with Q-Learning with linear function approximation (Q-LFA). The latter does not have any convergence guarantees for the SSP problem, in fact, it is known to diverge \cite{baird1995}, due to its off-policy nature \cite{tsitsiklis1997}. The example used in \cite{baird1995} is for the SSP problem, but it uses feature vectors of size greater than the size of the state space. This violates the linear independence assumption on basis functions, which is essential for convergence, see \cite{melo2008}. The off-policy divergence example of TD with LFA presented in \cite{tsitsiklis1997} is for the discounted cost problem. We combine both the ideas of \cite{baird1995} and \cite{tsitsiklis1997}, and come up with a SSP example where Q-LFA satisfies the condition of linear independence of basis functions, but it still diverges. The example has 2 non-terminal states: $i_1,i_2$, a terminal state: $i_0$ and 2 actions: $u_1,u_2$. The MDP diagram is given in Figure~\ref{fig4}, with probabilities written on the edges.
\begin{figure}
\centering
\begin{tikzpicture}[auto,node distance=5cm,>=latex,font=\large]

    \tikzstyle{round}=[thick,draw=black,circle]
    \tikzstyle{dot}=[thick,fill=black,circle]
    \node[round] (s0) {$i_1$};
    \node[dot,above =0.5cm of s0] (1) {};
    \node[dot,below =0.5cm of s0] (2) {};
    \node[round,right =of s0] (s1) {$i_2$};
    \node[dot,above =0.5cm of s1] (3) {};
    \node[dot,below =0.5cm of s1] (4) {};
    \node[round,right =2cm of s0] (s2) {$i_0$};
    
    \path[->] (s0) edge node {$u_1$} (1);
    \path[->] (s0) edge node {$u_2$} (2);
    \path[->] (s1) edge node {$u_1$} (3);
    \path[->] (s1) edge node {$u_2$} (4);
    
    \path[->] (1) edge [bend left] node[above,pos=0.2] {$0.9$} (s1);
    \path[->] (1) edge [bend left] node[above,pos=0.6] {$0.1$} (s2);
    \path[->] (2) edge [bend right] node[below,pos=0.2] {$0.1$} (s2);
    \path[->] (2) edge [bend left=100] node[left,pos=0.5] {$0.9$} (s0);
    \path[->] (3) edge [bend left=100] node[right,pos=0.5] {$0.9$} (s1);
    \path[->] (3) edge [bend right] node[above,pos=0.2] {$0.1$} (s2);
    \path[->] (4) edge [bend left] node[below,pos=0.6] {$0.1$} (s2);
    \path[->] (4) edge [bend left] node[below,pos=0.2] {$0.9$} (s0);
\end{tikzpicture}
\caption{MDP used for Q-Learning with LFA}
\label{fig4}
\end{figure}
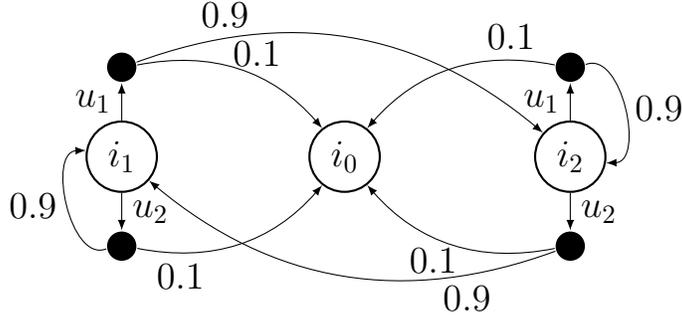
All costs are $0$. The initial state  distribution $h^\pi_0=[0.5,0.5]^T$. The features used for Q-LFA are: $\phi_1(i_1,u_1)=\phi_1(i_2,u_1)=[1,0]^T,\phi_1(i_1,u_2)=\phi_1(i_2,u_2)=[0,1]^T$. In our AC-FA algorithm we parameterize our policy with the softmax policy:
\[
\pi_\theta(i,u)=\frac{\exp(\theta^T\phi_1(i,u))}{\displaystyle\sum_{u' \in A} \exp(\theta^T\phi_1(i,u'))},i \in S^-, u \in A,\theta \in \mathbb{R}^2.
\]
We use the same $\phi_1$ features as above. The value features used for our AC-FA algorithm are $\phi(i_1)=\phi(i_2)=[1]^T$. We take the initial values of our AC-FA algorithm as $v_0=[-2.0]^T$ and $\theta_0=[-2.0,-1.0]^T$. We take the initial values of Q-LFA as $q_0=[-2.0,-1.0]^T$. The plots of the parameter values are given in the top left sub-figure of Figure~\ref{fig3}. We observe that the parameters of Q-LFA diverge to $-\infty$, whereas the value parameters of our AC-FA algorithm converge to $0$. We do not plot the policy parameters $\theta$, as they do not represent actual values of returns. For Q-LFA, we use constant exploration $\epsilon=1$. The step-sizes of Q-LFA satisfy the Robbins-Monro condition of Assumption~\ref{a4}(i). Note that the divergence rate of Q-LFA decreases due to the decreasing step-sizes.

Next, we test our AC-FA algorithm with SARSA with linear function approximation (SARSA-LFA). In \cite{melo2008}, it is proven that SARSA-LFA converges almost surely when the policy is Lipschitz continuous, with a small Lipschitz constant. But a  Lipschitz continuous policy that works with a small Lipschitz constant was not proposed. In \cite{zhang2023sarsa}, it is shown that the soft-max policy with high temperature works as a policy with small Lipschitz constant. It is also pointed out that higher temperature leads to poor exploitation while lower temperature leads to chattering. We test our AC-FA algorithm with SARSA-LFA with $\epsilon$-softmax policy as suggested in \cite{zhang2023sarsa}. We do not test our AC-FA algorithm with SARSA-LFA with $\epsilon$-greedy policy as suggested in \cite{gordon2000}, as it is not a theoretically rigorous procedure. We use the same MDP considered in \cite{zhang2023sarsa} that has three non-terminal states: $i_1,i_2,i_3$ and a terminal state: $i_0$. Two actions are feasible from $i_1$, namely $u_0,u_1$, while one action $u_0$ is feasible from both $i_2$ and $i_3$. The MDP diagram is given in Figure~\ref{fig5}, with action and rewards given on the edges.

\begin{figure}
\centering
\begin{tikzpicture}[auto,node distance=5cm,>=latex,font=\large]

    \tikzstyle{round}=[thick,draw=black,circle]
    \tikzstyle{dot}=[thick,fill=black,circle]
    \node[round] (s2) {$i_2$};
    \node[round,below =2cm of s2] (s3) {$i_3$};
    \node[round,right=3cm of $(s2)!0.5!(s3)$] (s0) {$i_0$};
    \node[round,left=3cm of $(s2)!0.5!(s3)$] (s1) {$i_1$};
    
    \path[->] (s1) edge node[above=0.1cm] {$u_0,0$} (s2);
    \path[->] (s1) edge node[below=0.1cm] {$u_1,0$} (s3);
    \path[->] (s2) edge node[above=0.1cm] {$u_0,-2$} (s0);
    \path[->] (s3) edge node[below=0.1cm] {$u_0,-1$} (s0);
\end{tikzpicture}
\caption{MDP used for SARSA with LFA}
\label{fig5}
\end{figure}
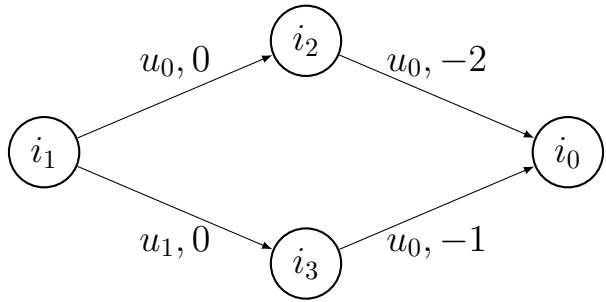
All the transition probabilities in this setting are $1$. The starting state is chosen to be $i_1$. The features used for SARSA-LFA are: $\phi_1(i_1,u_0)=[1,0,0]^T,\phi_1(i_1,u_1)=[0,1,0]^T,\phi_1(i_2,u_0)=\phi_1(i_3,u_0)=[0,0,1]^T$, as suggested by \cite{zhang2023sarsa}. For our AC-FA algorithm, we parameterize the policy with the following $\epsilon$-softmax policy:
\[
\begin{split}
&\pi_\theta(i,u)=\frac{\epsilon}{|A|}+(1-\epsilon)\frac{\exp(\theta^T\phi_1(i,u))}{\displaystyle\sum_{u' \in A} \exp(\theta^T\phi_1(i,u'))},\\
&i \in S^-, u \in A,\theta \in \mathbb{R}^3.
\end{split}
\]
We use the same $\phi_1$ features as above. We use $\epsilon=0.1$ for both SARSA-LFA and AC-FA. The value features used for our AC-FA algorithm are $\phi(i_1)=[1,0]^T,\phi(i_2)=\phi(i_3)=[0,1]^T$. All the parameters of SARSA-LFA and AC-FA start from zeros. The plots of parameter values are given in the top right and bottom left sub-figures of Figure~\ref{fig3}. The states $i_2$ and $i_3$ are given the same importance in both SARSA-LFA and AC-FA by the definition of the feature vectors. In this setting, the Q-parameters of SARSA-LFA and the value parameters of AC-FA are expected to converge to $-1.5$. We select the initial step-sizes for both SARSA-LFA and AC-FA to be $0.01$ and slowly decrease them to satisfy Assumption~\ref{a4}. In \cite{zhang2023sarsa} a constant step-size of $0.01$ is used for SARSA-LFA, and the Q-parameters are seen to chatter. It is however also claimed that with decreasing step-sizes the chattering happens though it slows down. However, in our experiments with decreasing step-sizes, we see that chattering disappears altogether after some time for SARSA-LFA.

We also see that chattering does not happen at all  for our AC-FA algorithm. In the bottom right sub-figure of Figure~\ref{fig3}, we plot the running return of SARSA-LFA and AC-FA. We can see similarly that chattering happens for SARSA-LFA but no chattering happens for AC-FA for the running return. We use temperature $\tau=0.01$ for SARSA-LFA, as suggested in \cite{zhang2023sarsa}, which means the Lipschitz constant of the behavioral policy is high and exploitation is high. Although the chattering disappears after sometime in our experiments, ideally for SARSA we should start from a high temperature (more exploration) and gradually decrease to low temperature (more exploitation). Currently, there is no theoretically backed decaying schedule for $\tau$ in the literature.

\section{Conclusions}

We presented two tabular algorithms: Actor-Critic (AC) and Critic-Actor (AC) for the Stochastic Shortest Path (SSP) problem. We also presented the Actor-Critic algorithm with function approximation (AC-FA) for the SSP problem. We show asymptotic almost sure convergence of all the algorithms. In Section~\ref{numerical}, we show that both our tabular AC and CA algorithms perform better than tabular Q-Learning and SARSA, when theoretically backed exploration and exploitation strategies are used. We also see that our AC-FA algorithm reliably converges for a diagnostic MDP, whereas Q-Learning with Linear function approximation (Q-LFA) diverges. We also see that our AC-FA algorithm converges reliably for a diagnostic MDP, whereas SARSA with linear function approximation (SARSA-LFA) chatters for sometime in the beginning. We also point out that SARSA-LFA does not have any theoretically backed decaying schedule for their exploration. To the best of our knowledge, our AC-FA is the only function approximation based algorithm that converges and performs reliably for the SSP problem.

In this paper, we developed our algorithms assuming that all policies are proper satisfying Assumption~\ref{a1}, which is also referred to as the Basic Assumption in Section~\ref{intro}. Our algorithms can be extended to the Standard Assumption setting in Section~\ref{intro}, by projecting the critic recursions to a large enough compact space. This will ensure that our critic estimates do not diverge for improper policies, and our actor will eventually learn the optimal proper policy. The projection technique will work for both our tabular and function approximation algorithms under the Standard Assumption.

As future work, one can explore constrained optimization algorithms for the SSP problem as done for the Discounted Cost Problem \cite{bhatnagar2010scl} and Average Cost Problem \cite{bhatnagar2012online}. One can also develop Natural Actor-Critic algorithms\cite{bhatnagar2009} for the SSP problem to improve the convergence behavior.
\bibliographystyle{IEEEtran}
\bibliography{IEEEfull}
\end{document}